\documentclass[11pt, twoside, sort, a4paper]{article}

\usepackage{url}
\usepackage{wrapfig}
\usepackage{enumitem}
\usepackage{graphicx}
\usepackage{caption}
\usepackage{subcaption}
\usepackage{hyperref}
\usepackage{bm}
\usepackage{mathtools}
\usepackage{float}
\usepackage{color}
\usepackage{footnote}
\makesavenoteenv{tabular}
\makesavenoteenv{table}
\usepackage{geometry}
\usepackage{empheq}
\usepackage{mathrsfs}
\usepackage{tikz}
\usepackage{cases}
\usepackage{amsmath,amsthm,amssymb,amsfonts}
\usepackage{booktabs}
\usepackage{multirow}
\usepackage{nccmath}
\usepackage{algorithm}
\usepackage{algorithmic}
\usepackage{comment}
\usepackage[titletoc,title]{appendix}
\usepackage[running,mathlines]{lineno}
\usepackage{titlesec}
\usepackage[normalem]{ulem}
\usepackage{cancel}

\usetikzlibrary{matrix,positioning,calc,arrows.meta}

\DeclareCaptionLabelFormat{lowercaseparens}{(#2)}
\definecolor{dpurple}{rgb}{0.6,0.00,0.50}

\newcommand{\dpurple}[1]{\color{black}{#1}}

\newcommand{\blue}[1]{{\color{black}{#1}}}
\newcommand{\sblue}[1]{{\color{black}{#1}}}

\AtBeginDocument{
  \addtolength{\abovedisplayskip}{-0.5ex}
  \addtolength{\abovedisplayshortskip}{-0.5ex}
  \addtolength{\belowdisplayskip}{-0.5ex}
  \addtolength{\belowdisplayshortskip}{-0.5ex}
}

\makeatletter
\def\thm@space@setup{\thm@preskip=3pt
\thm@postskip=3pt}
\makeatother

\titlespacing*{\section}
  {0pt}{0.6\baselineskip}{0.2\baselineskip}
\titlespacing*{\subsection}
  {0pt}{0.6\baselineskip}{0.2\baselineskip}
\titlespacing*{\subsubsection}
  {0pt}{0.6\baselineskip}{0.2\baselineskip}

\newcommand{\EQ}{\begin{equation}}
\newcommand{\EN}{\end{equation}}
\newcommand{\EQS}{\begin{equation*}}
\newcommand{\ENS}{\end{equation*}}
\newcommand{\EQA}{\begin{eqnarray}}
\newcommand{\ENA}{\end{eqnarray}}
\newcommand{\EQAS}{\begin{eqnarray*}}
\newcommand{\ENAS}{\end{eqnarray*}}

\newcommand{\Rbb}{\mathbb{R}}
\newcommand{\Nbb}{\mathbb{N}}

\newtheorem{theorem}{Theorem}
\newtheorem{lemma}{Lemma}
\newtheorem{remark}{Remark}
\newtheorem{assumption}{Assumption}

\newtheorem{corollary}{Corollary}

\numberwithin{equation}{section}
\numberwithin{table}{section}
\numberwithin{figure}{section}
\numberwithin{definition}{section}
\numberwithin{theorem}{section}
\numberwithin{lemma}{section}
\numberwithin{remark}{section}
\numberwithin{assumption}{section}
\numberwithin{condition}{section}
\numberwithin{property}{section}
\numberwithin{proposition}{section}
\numberwithin{corollary}{section}
\numberwithin{example}{section}
\numberwithin{algorithm}{section}

\begin{document}

\title{A data-driven Fourier-mixture neural-network method for density estimation}

\author{
Duy-Minh Dang\thanks{School of Mathematics and Physics, The University of Queensland,
St Lucia, Brisbane 4072, Australia,
email: \texttt{duyminh.dang@uq.edu.au}}
\and
Volter Entoma\thanks{School of Mathematics and Physics, The University of Queensland,
St Lucia, Brisbane 4072, Australia,
email: \texttt{v.entoma@student.uq.edu.au}}
}

\date{}
\maketitle

\begin{abstract}
We propose a data-driven Fourier-trained neural-network method for estimating
fixed-horizon probability densities from empirical characteristic-function
(CF) information. The estimator is a positive Gaussian--Laplace mixture with
closed-form CF, so training can be performed directly in Fourier space while
preserving nonnegativity and unit mass. We consider two sampling settings. In
the direct i.i.d.\ sampling setting, the method is trained against an empirical
CF constructed from i.i.d.\ samples. In the resampling-based pseudo-sampling
setting, it is trained against an empirical pseudo-CF constructed from
dependent data by resampling. For the direct i.i.d.\ case, we derive an
expected squared $L_2$ density-error bound that separates Fourier
truncation, empirical training error, discretization, and CF sampling error,
together with a corresponding bound for the expected $L_2$ norm.
For the pseudo-sampling case, we obtain conditional counterparts with
an additional pseudo-law discrepancy term. We develop a multidimensional
extension of the framework and analyze its computational complexity.
Numerical experiments show competitive performance relative to Expectation--Maximization on Gaussian-mixture benchmarks and clear gains on heavy-tailed targets. A controlled comparison with exact-CF training quantifies the additional density error introduced by empirical-CF training; further experiments show that the $L_2$ norm decays approximately as the inverse square root of the sample size when the remaining error terms are controlled, and demonstrate the method on a one-year Australian equity return law estimated from resampled dependent data.

\vspace{.1in}

\noindent\textbf{Keywords:}
empirical characteristic function, density estimation, Fourier-domain training,
Gaussian--Laplace mixture, neural network approximation.

\vspace{.1in}

\noindent\textbf{MSC codes:}
65D15, 62G07, 60E10, 68T07, 41A30.
\end{abstract}
\section{Introduction}
\label{sec:intro}
Estimating a fixed-horizon probability law from data is a basic task in computational science. Such laws arise whenever one seeks to approximate the distribution of a state increment over a prescribed time interval, for example in stochastic simulation, uncertainty quantification, filtering, and data-driven decision problems. In many applications, the target density is unavailable in closed form and only sample information is available. This is especially true when the underlying observations form a dependent time series and the quantity of interest is a law over a longer horizon than the sampling frequency of the data.

Two broad classes of density estimation methods are commonly used in this setting.
The first consists of physical-space estimators, such as kernel density estimation \cite{silverman1986density} and finite-mixture models fitted by likelihood-based procedures \cite{mclachlan2000finite}. Among these, Gaussian mixtures estimated by Expectation--Maximization (EM) provide a classical positive parametric benchmark \cite{dempster1977maximum,mclachlan2000finite}.

The second class uses Fourier information, exploiting the fact that a characteristic function (CF) always exists and may be easier to analyze or approximate than the density itself. A prominent example is the COS method \cite{fang2008novel}, which approximates densities by finite cosine-series expansions with expansion coefficients obtained from the CF. Data-driven variants have also been developed; in particular, the data-driven COS method of \cite{leitao2018data} uses samples to construct a Fourier-cosine density estimator without requiring a closed-form CF. However, truncated Fourier-cosine expansions do not in general enforce nonnegativity of the recovered density, and this loss of positivity can be problematic for short-horizon transition densities and for subsequent stochastic-control computations \cite{du2025fourier,ForsythLabahn2017}. Fourier-trained neural-network (NN) methods provide another Fourier-domain route in the exact-transform setting; see, for example, \cite{du2025fourier,dang2026monotone}, where the target CF is assumed known in closed form.

The present paper focuses on the data-driven Fourier-trained NN setting, where the exact CF is unavailable but empirical transform information can still be constructed from samples. One may have access to direct i.i.d.\ samples from the target fixed-horizon law, from which an empirical CF is constructed. In the presence of temporal dependence, one instead observes a dependent time series, from which an empirical pseudo-CF is constructed by a suitable resampling procedure. This motivates a Fourier-trained NN model that remains a proper density throughout optimization, is expressive enough to capture non-Gaussian features, and admits a closed-form CF so that the learning problem can be posed directly in Fourier space.

The main contributions of the paper are as follows.
\begin{itemize}[noitemsep, topsep=2pt, leftmargin=*]
\item
We introduce a data-driven Fourier-trained NN method based on a positive Gaussian--Laplace mixture parametrized by a bounded single-hidden-layer feedforward NN. The resulting density is nonnegative and integrates to one by construction, while its CF is available in closed form. This allows the learning problem to be posed directly against empirical transform data.

\item
For the direct i.i.d.\ sampling setting, we derive an {\dpurple{expected squared $L_2$ density-error bound, together with a corresponding bound for the expected $L_2$ norm}}. A central technical ingredient is a Fourier-to-real-space analysis, via Plancherel's theorem, that transfers control of the Fourier-domain training error to control of the density error in physical space. The resulting decomposition separates Fourier truncation, empirical training error, discretization error, and empirical-CF sampling error.

\item
For the resampling-based pseudo-sampling setting, we show that the same analysis carries over conditionally on the observed history. The resulting {\dpurple{conditional error bounds have the same structural form as in the direct i.i.d.\ case, with an additional squared pseudo-law discrepancy term in the squared-$L_2$ density-error bound}}. This provides a theoretical basis for Fourier-trained NN estimation from dependent time-series data.

\item
We extend the method to multiple dimensions, analyze its computational complexity, and validate it numerically on a range of benchmark and data-driven examples. These include {\dpurple{a controlled comparison of exact-CF reference and empirical-CF training,}} Gaussian-mixture sanity checks, heavy-tailed and jump benchmarks, a two-dimensional (2D) Cauchy example, {\dpurple{an $L_2$ error-decay study}}, and a resampling-based pseudo-sampling experiment built from Australian equity data.
\end{itemize}
The paper therefore combines several features that are rarely treated together: data-driven Fourier training of a single-hidden-layer NN, a positivity-preserving Gaussian--Laplace output model, Fourier-to-real-space analysis underpinning rigorous density error bounds, and a treatment of both direct i.i.d.\ sampling and dependent-data pseudo-sampling. The numerical results support this picture: the controlled comparison quantifies the additional density error introduced by empirical-CF training and shows that it decreases as the sample size grows; the method is competitive with EM on Gaussian-mixture benchmarks, improves over the Gaussian subclass on heavy-tailed targets, exhibits $L_2$-norm decay close to the inverse-square-root rate in the sample size in a well-specified setting when the remaining error terms are controlled, and remains effective in the resampling-based Australian equity experiment.

Although some numerical illustrations use financial return data, the methodology is not finance-specific. The central problem is more general: estimating a positive fixed-horizon density from empirical transform information, including settings where the available observations are dependent rather than i.i.d. In this sense, the present work can be viewed as a data-driven Fourier counterpart to positive finite-mixture density estimation, with emphasis on Fourier-trained NN models, transform-domain learning, and rigorous error analysis.

The remainder of the paper is organized as follows. Section~\ref{sc:NN_Fourier} introduces the neural network Fourier-mixture method in the direct i.i.d.\ sampling setting. Section~\ref{sec:error} presents the corresponding error analysis. Section~\ref{sec:bootstrap} extends the framework to resampling-based pseudo-sampling from dependent data. Section~\ref{sc:multi} gives the multi-dimensional extension, and Section~\ref{sc:complex} discusses computational complexity. Numerical experiments are reported in Section~\ref{sec:num}. Finally, Section~\ref{sec:conclusion} provides concluding remarks and outlines future directions.

\section{A neural network Fourier-mixture method}
\label{sc:NN_Fourier}
\subsection{Data-driven setting}
We work in a data-driven setting. Given observed data points $X_1,\dots,X_M\in\Rbb$, we seek to estimate an unknown probability density $g$ on $\Rbb$ from Fourier-domain information. Throughout, we use the Fourier pair
\begin{equation}
\label{eq:G_M}
G(\eta)
=
\int_{\Rbb} e^{i\eta x}\,g(x)\,dx,
\quad \eta\in\Rbb,
\qquad
g(x)
=
\frac{1}{2\pi}\int_{\Rbb} e^{-i\eta x}\,G(\eta)\,d\eta,
\quad x\in\Rbb.
\end{equation}
In contrast with many Fourier-based approaches, including \cite{du2025fourier}, where the CF $G(\cdot)$ is available in closed form, here we work only with the empirical CF
given by
\begin{equation}
\label{eq:hatG_M}
\widehat G_M(\eta)
=
\frac{1}{M}\sum_{m=1}^{M} e^{i\eta X_m},
\qquad \eta\in\Rbb.
\end{equation}
In this section, our objective is to recover $g$ from $\widehat G_M$ by fitting a bounded class of Gaussian--Laplace mixtures parametrized by a single-hidden-layer FFNN, whose Fourier transforms are available in closed form.
\subsection{Description}
We first state the regularity conditions used in the subsequent error analysis.

\begin{assumption}[Modelling assumptions]
\label{ass:modelling}
Let $g$ denote the target density, and let $G$ denote its Fourier transform, i.e.\ the CF of the associated probability law. The following conditions hold.
\begin{enumerate}[noitemsep, topsep=2pt,label=(A\arabic*), leftmargin=*]
\item The observations $X_1,\dots,X_M$ are i.i.d.\ with common density $g$.
\item The density satisfies $g\in L_1(\Rbb)\cap L_\infty(\Rbb)$.
\item For any finite interval $[\eta_1,\eta_2]\subset\Rbb$ with $\eta_1<\eta_2$, the CF $G$ is Lipschitz continuous on $[\eta_1,\eta_2]$, i.e.\ there exists a finite constant $L_{[\eta_1,\eta_2]}>0$ such that
\[
|G(\eta')-G(\eta'')|
\le
L_{[\eta_1,\eta_2]}\,|\eta'-\eta''|,
\qquad
\forall\,\eta',\eta''\in[\eta_1,\eta_2].
\]
\end{enumerate}
\end{assumption}

\begin{remark}[Discussion of Assumption~\ref{ass:modelling}]
\label{rem:assumptions_section2}
Assumption~\ref{ass:modelling}~(A1) is a sampling condition. Under (A1), for each fixed $\eta\in\Rbb$, the empirical CF in \eqref{eq:hatG_M} satisfies
\begin{equation}
\label{eq:ecf_mean_var_section2}
\mathbb{E}\!\left[\widehat G_M(\eta)\right]=G(\eta),
\qquad
\operatorname{Var}\!\left(\widehat G_M(\eta)\right)
=
\frac{1-|G(\eta)|^2}{M}.
\end{equation}
Hence $\widehat G_M$ is an unbiased estimator of the true CF $G$.
Dependent time-series data are addressed in Section~\ref{sec:bootstrap} through a resampling-based pseudo-sampling framework. As a first step, we work here in the direct i.i.d.\ setting to isolate the Fourier-inversion and approximation errors and to establish the baseline analysis for that extension.

Assumption~\ref{ass:modelling}~(A2) implies $g\in L_2(\Rbb)$, since $\|g\|_{L_1}=1$ and
\[
\|g\|_{L_2}^2
=
\int_{\Rbb}|g(x)|^2\,dx
\le
\|g\|_{L_\infty}\|g\|_{L_1}
=
\|g\|_{L_\infty}<\infty.
\]
By Plancherel's theorem, this yields $G\in L_2(\Rbb)$, and therefore for every $\varepsilon>0$ there exists $\eta'>0$ such that $\int_{\Rbb\setminus[-\eta',\eta']}|G(\eta)|^2\,d\eta\le \varepsilon$.

Assumption~\ref{ass:modelling}~(A3) is a local regularity condition. It requires Lipschitz continuity only on the compact Fourier interval used for training and discretization bounds. Assumptions~(A2) and~(A3) are mild for many smooth parametric families, including Gaussian laws; jump--diffusion models with a nondegenerate Gaussian component, such as the Merton model \cite{merton1976option} and the Kou model \cite{kou01}; and many tempered-tail L'evy models with closed-form CFs and bounded densities, including the Normal--Inverse--Gaussian model \cite{barndorff1997normal} and CGMY/tempered-stable families with $Y\in(0,2)$ under standard parameter regimes \cite{carr2002fine}.

\end{remark}

\paragraph{Gaussian--Laplace mixture representation.}
Let the Gaussian and Laplace activation functions be defined, respectively, by
\[
\phi_{\mathcal G}(z)=e^{-z^2},
\qquad
\phi_{\mathcal L}(z)=e^{-|z|}.
\]
We define the class of mixed single-layer FFNN
\begin{equation}
\label{eq:Sigma_GL_raw}
\Sigma(\phi_{\mathcal G},\phi_{\mathcal L})
=
\Big\{\widehat g:\Rbb\to\Rbb\ \big|\
\widehat g(x;\vartheta)
=
\sum_{k=1}^{K_{\mathcal G}}\widetilde\rho_k^{(\mathcal G)}
\phi_{\mathcal G}\big(\widetilde w_k^{(\mathcal G)}x+\widetilde b_k^{(\mathcal G)}\big)
+
\sum_{\ell=1}^{K_{\mathcal L}}\widetilde\rho_\ell^{(\mathcal L)}
\phi_{\mathcal L}\big(\widetilde w_\ell^{(\mathcal L)}x+\widetilde b_\ell^{(\mathcal L)}\big)
\Big\},
\end{equation}
where $K_{\mathcal G},K_{\mathcal L}\in\Nbb$ and $\vartheta$ collects the raw coefficients, weights, and biases.

For density estimation, we work instead with a positive normalized subclass parameterized by mixture logits, locations, and raw scales.
To this end, we define the normalized Gaussian and Laplace kernels
\begin{equation}
\label{eq:normalized_kernels_section2}
\varphi_{\mathcal G}(x;\mu,\sigma)
=
\frac{1}{\sqrt{2\pi}\sigma}
\exp\!\left(-\frac{(x-\mu)^2}{2\sigma^2}\right),
\qquad
\varphi_{\mathcal L}(x;\nu,b)
=
\frac{1}{2b}\exp\!\left(-\frac{|x-\nu|}{b}\right),
\end{equation}
with $\sigma>0$ and $b>0$. The raw hidden-unit form and the normalized kernels are related by
\begin{equation}
\label{eq:unit_mappings_section2}
\begin{aligned}
\mu_k
&=
-\frac{\widetilde b_k^{(\mathcal G)}}{\widetilde w_k^{(\mathcal G)}},
\qquad
\sigma_k
=
\frac{1}{\sqrt{2}\,|\widetilde w_k^{(\mathcal G)}|},
\qquad
\widetilde\beta_k^{(\mathcal G)}
=
\frac{\sqrt{\pi}\,\widetilde\rho_k^{(\mathcal G)}}{|\widetilde w_k^{(\mathcal G)}|},
\\
\nu_\ell
&=
-\frac{\widetilde b_\ell^{(\mathcal L)}}{\widetilde w_\ell^{(\mathcal L)}},
\qquad
b_\ell
=
\frac{1}{|\widetilde w_\ell^{(\mathcal L)}|},
\qquad
\widetilde\beta_\ell^{(\mathcal L)}
=
\frac{2\,\widetilde\rho_\ell^{(\mathcal L)}}{|\widetilde w_\ell^{(\mathcal L)}|}.
\end{aligned}
\end{equation}
We now introduce the positive normalized parametrization used for training. For $k=1,\ldots,K_\mathcal{G}$ and $\ell=1,\ldots,K_\mathcal{L}$, the trainable variables are the Gaussian mixture logits $\rho_k^{(\mathcal G)}$, locations $\mu_k$, and raw scales $w_k^{(\mathcal G)}$, together with the Laplace mixture logits $\rho_\ell^{(\mathcal L)}$, locations $\nu_\ell$, and raw scales $w_\ell^{(\mathcal L)}$. The mixture weights are obtained from the joint softmax map
\begin{equation}
\label{eq:softmax_weights_section2}
\begin{aligned}
\beta_k^{(\mathcal G)}
&=
\exp(\rho_k^{(\mathcal G)})
\bigg(
\sum_{j=1}^{K_{\mathcal G}}\exp(\rho_j^{(\mathcal G)})
+
\sum_{m=1}^{K_{\mathcal L}}\exp(\rho_m^{(\mathcal L)})
\bigg)^{-1},
\\
\beta_\ell^{(\mathcal L)}
&=
\exp(\rho_\ell^{(\mathcal L)})
\bigg(
\sum_{j=1}^{K_{\mathcal G}}\exp(\rho_j^{(\mathcal G)})
+
\sum_{m=1}^{K_{\mathcal L}}\exp(\rho_m^{(\mathcal L)})
\bigg)^{-1}.
\end{aligned}
\end{equation}
The Gaussian and Laplace scales are generated by the softplus map
\begin{equation}
\label{eq:softplus_scales_section2}
\sigma_k=\operatorname{softplus}\big(w_k^{(\mathcal G)}\big)+\varepsilon_{\mathrm{sf}},
\qquad
b_\ell=\operatorname{softplus}\big(w_\ell^{(\mathcal L)}\big)+
\varepsilon_{\mathrm{sf}},
\end{equation}
for $k=1,\ldots,K_{\mathcal G}$ and $\ell=1,\ldots,K_{\mathcal L}$, with fixed $\varepsilon_{\mathrm{sf}}>0$. The resulting Gaussian--Laplace mixture class is
\begin{equation}
\label{eq:Sigma_GL_pos}
\Sigma_{\mathcal G\mathcal L}
=
\Big\{\widehat g:\Rbb\to\Rbb\ \Big|\
\widehat g(x;\theta)
=
\sum_{k=1}^{K_{\mathcal G}}\beta_k^{(\mathcal G)}\varphi_{\mathcal G}(x;\mu_k,\sigma_k)
+
\sum_{\ell=1}^{K_{\mathcal L}}\beta_\ell^{(\mathcal L)}\varphi_{\mathcal L}(x;\nu_\ell,b_\ell),
\ \theta\in\Theta
\Big\}.
\end{equation}
The effective parameter vector is
\begin{equation}
\label{eq:theta}
\theta
=
\Big(
\{\beta_k^{(\mathcal G)},\mu_k,\sigma_k\}_{k=1}^{K_{\mathcal G}},
\{\beta_\ell^{(\mathcal L)},\nu_\ell,b_\ell\}_{\ell=1}^{K_{\mathcal L}}
\Big).
\end{equation}
We restrict the effective mixture parameters to a bounded set
\begin{align}
\label{eq:Theta_GL_section2}
\Theta
=
\Big\{\theta:\ &
\beta_k^{(\mathcal G)}\ge 0,\ \beta_\ell^{(\mathcal L)}\ge 0,
\quad
\sum_{k=1}^{K_{\mathcal G}}\beta_k^{(\mathcal G)}
+
\sum_{\ell=1}^{K_{\mathcal L}}\beta_\ell^{(\mathcal L)}
=
1,
\nonumber
\\
&
|\mu_k|,\,|\nu_\ell|\le \overline\mu,
\quad
0<\sigma_{\min}\le \sigma_k\le \sigma_{\max},
\quad
0<b_{\min}\le b_\ell\le b_{\max}
\Big\}.
\end{align}
Under \eqref{eq:Theta_GL_section2}, each $\widehat g(\cdot;\theta)\in\Sigma_{\mathcal G\mathcal L}$ is a probability density.

\paragraph{Closed-form characteristic function.}
Direct calculation gives
\begin{equation}
\label{eq:GL_cf_section2}
\widehat G(\eta;\theta)
=
\sum_{k=1}^{K_{\mathcal G}}
\beta_k^{(\mathcal G)}
\exp\!\left(i\eta\mu_k-\sigma_k^2\eta^2/2\right)
+
\sum_{\ell=1}^{K_{\mathcal L}}
\beta_\ell^{(\mathcal L)}
\frac{\exp(i\eta\nu_\ell)}{1+b_\ell^2\eta^2}.
\end{equation}
Hence
\begin{align}
\label{eq:GL_cf_real_section2}
\operatorname{Re}\widehat G(\eta;\theta)
&=
\sum_{k=1}^{K_{\mathcal G}}
\beta_k^{(\mathcal G)}
\exp\!\left(-\sigma_k^2\eta^2/2\right)\cos(\eta\mu_k)
+
\sum_{\ell=1}^{K_{\mathcal L}}
\beta_\ell^{(\mathcal L)}
\frac{\cos(\eta\nu_\ell)}{1+b_\ell^2\eta^2},
\\
\label{eq:GL_cf_imag_section2}
\operatorname{Im}\widehat G(\eta;\theta)
&=
\sum_{k=1}^{K_{\mathcal G}}
\beta_k^{(\mathcal G)}
\exp\!\left(-\sigma_k^2\eta^2/2\right)\sin(\eta\mu_k)
+
\sum_{\ell=1}^{K_{\mathcal L}}
\beta_\ell^{(\mathcal L)}
\frac{\sin(\eta\nu_\ell)}{1+b_\ell^2\eta^2}.
\end{align}
In particular, $\widehat G(0;\theta)=1$.

\paragraph{$L_2$-approximation error and Fourier-domain invariance.}
Since $g\in L_1(\Rbb)\cap L_2(\Rbb)$ and every $\widehat g(\cdot;\theta)\in\Sigma_{\mathcal G\mathcal L}$ with $\theta\in\Theta$ also belongs to $L_1(\Rbb)\cap L_2(\Rbb)$, Plancherel's theorem gives
\begin{equation}
\label{eq:GL_plancherel_section2}
\int_{\Rbb}\big|g(x)-\widehat g(x;\theta)\big|^2\,dx
=
\frac{1}{2\pi}\int_{\Rbb}\big|G(\eta)-\widehat G(\eta;\theta)\big|^2\,d\eta.
\end{equation}
This identity is the key structural fact used below: it allows both training and error analysis to be carried out through $\widehat G_M$ and $\widehat G(\cdot;\theta)$ rather than directly in real space.


\subsection{Training loss and regularization}
\label{subsec:training_loss_section2}
The exact CF $G$ is not available. Instead, training is based on $\widehat G_M$. We restrict the Fourier domain to $[-\eta',\eta']$ and choose training nodes $\{\eta_p\}_{p=1}^{P}\subset[-\eta',\eta']$ induced by a deterministic, potentially non-uniform, partition
\[
-\eta'=\xi_0<\xi_1<\cdots<\xi_P=\eta'.
\]
For each cell $[\xi_{p-1},\xi_p]$, let $\eta_p\in[\xi_{p-1},\xi_p]$ be the training node and set
\[
\delta_p=\xi_p-\xi_{p-1},
\qquad p=1,\ldots,P.
\]
We assume
\begin{equation}
\label{eq:eta_partition_section2}
\delta_{\min}=\frac{C_0}{P},
\qquad
\delta_{\max}=\frac{C_1}{P},
\qquad
\delta_{\min}=\min_{1\le p\le P}\delta_p,
\quad
\delta_{\max}=\max_{1\le p\le P}\delta_p,
\end{equation}
for finite constants $C_0,C_1>0$ independent of $P$.

For each training node $\eta_p$, define the empirical-CF real and imaginary residuals
\begin{equation}
\label{eq:residuals_section2}
\Delta_{p,\operatorname{Re}}(\theta)
=
\operatorname{Re}\widehat G_M(\eta_p)-\operatorname{Re}\widehat G(\eta_p;\theta),
\quad
\Delta_{p,\operatorname{Im}}(\theta)
=
\operatorname{Im}\widehat G_M(\eta_p)-\operatorname{Im}\widehat G(\eta_p;\theta).
\end{equation}
For $\lambda \ge 0$, we define the Fourier-domain loss function
\begin{equation}
\label{eq:Loss_MP_section2}
\mathrm{Loss}_{M,P}(\theta)
=
\mathrm{MSE}_{M,P}(\theta)
+
\lambda\,R_{M,P}(\theta),
\qquad
\theta\in\widehat\Theta\subseteq\Theta,
\end{equation}
where
\begin{equation}
\label{eq:MSE_MP_section2}
\begin{aligned}
\mathrm{MSE}_{M,P}(\theta)
&=
\frac{1}{P}\sum_{p=1}^{P}
\Big(
\Delta_{p,\operatorname{Re}}(\theta)^2
+
\Delta_{p,\operatorname{Im}}(\theta)^2
\Big),
\\
R_{M,P}(\theta)
&=
\frac{1}{P}\sum_{p=1}^{P}
\Big(
|\Delta_{p,\operatorname{Re}}(\theta)|
+
|\Delta_{p,\operatorname{Im}}(\theta)|
\Big).
\end{aligned}
\end{equation}
The first term in \eqref{eq:Loss_MP_section2} is the discrete Fourier-space $L_2$ misfit between the learned CF $\widehat G(\cdot;\theta)$ and the empirical CF $\widehat G_M$. The second term is an MAE contribution that aims to improve robustness in oscillatory or high-curvature frequency regions.

\paragraph{Data-driven  minimizer.}
Let $\widehat\Theta\subseteq\Theta$ denote the feasible search set induced by the chosen component counts $(K_{\mathcal G},K_{\mathcal L})$ and the corresponding bounds on the effective parameters. The data-driven minimizer is
\begin{equation}
\label{eq:theta_hat_section2}
\widehat\theta
\in
\arg\min_{\theta\in\widehat\Theta}\mathrm{Loss}_{M,P}(\theta),
\end{equation}
and the estimated density is $\widehat g(\cdot;\widehat\theta)\in\Sigma_{\mathcal G\mathcal L}$.

\paragraph{Optional affine preprocessing.}
When the empirical CF is highly oscillatory or supported over a large frequency range, it is often convenient to apply an affine  change of variables $Y=aX+c$ with $a>0$ before training \cite{du2025fourier}. The empirical CF of the transformed variable then satisfies $\widehat G_M^{Y}(\eta)=e^{i\eta c}\widehat G_M^{X}(a\eta)$.
After training in the $Y$-variable, the estimated density in the original variable is recovered by $\widehat g_X(x;\widehat\theta)=a\,\widehat g_Y(ax+c;\widehat\theta)$.

\paragraph{Training algorithm.}
We use a two-stage optimization strategy.
The first stage performs a coarse exploration of the loss landscape, while the second stage refines the solution using a smaller learning rate. The use of AMSGrad and Adam in these two stages follows standard NN practice; see \cite{AMSgrad, adam}.
\begin{algorithm}[H]
\caption{Data-driven Fourier-trained Gaussian--Laplace mixture model}
\label{alg:data_driven_fournet}
\begin{algorithmic}[1]
\STATE \textbf{Input:} data $\{X_m\}_{m=1}^{M}$, component counts $(K_{\mathcal G},K_{\mathcal L})$, Fourier window $[-\eta',\eta']$, training nodes $\{\eta_p\}_{p=1}^{P}$, and MAE weight $\lambda$.
\STATE If needed, apply affine preprocessing $Y_m=aX_m+c$ and replace the data by $\{Y_m\}_{m=1}^{M}$.
\STATE Compute $\widehat G_M(\eta_p)=\frac{1}{M}\sum_{m=1}^{M}e^{i\eta_p Y_m}$ for $p=1,\ldots,P$.
\STATE Initialize the trainable logits, locations, and raw scales.
\STATE \textbf{Stage I:} minimize $\mathrm{Loss}_{M,P}$ by AMSGrad (or Adam) with a relatively large learning rate.
\STATE \textbf{Stage II:} continue minimization with Adam at a smaller learning rate until convergence, or stop by early stopping / validation loss.
\STATE Return $\widehat\theta$, the estimated density $\widehat g(\cdot;\widehat\theta)$, and the learned CF $\widehat G(\cdot;\widehat\theta)$.
\end{algorithmic}
\end{algorithm}

\section{Error analysis}
\label{sec:error}
This section analyzes the expected {\dpurple{squared}} $L_2$ {\dpurple{density}} error under direct i.i.d.\ sampling. We first introduce the exact Fourier-domain loss on the training nodes, then quantify the empirical-CF sampling error, and finally combine this with discretization and truncation estimates.  {\dpurple{All expectations are taken with respect to the i.i.d.\ sample and, when applicable, any randomness in the training procedure.}}

For each training node $\eta_p$, define the exact real and imaginary residuals
\begin{equation}
\label{eq:exact_res}
\Delta_{p,\operatorname{Re}}^{*}(\theta)
=
\operatorname{Re}G(\eta_p)-\operatorname{Re}\widehat G(\eta_p;\theta),
\qquad
\Delta_{p,\operatorname{Im}}^{*}(\theta)
=
\operatorname{Im}G(\eta_p)-\operatorname{Im}\widehat G(\eta_p;\theta).
\end{equation}
Let
\begin{equation}
\label{eq:MSE_exact_section3}
\begin{aligned}
\mathrm{MSE}_{P}^{*}(\theta)
&=
\frac{1}{P}\sum_{p=1}^{P}
\Big(
\Delta_{p,\operatorname{Re}}^{*}(\theta)^2
+
\Delta_{p,\operatorname{Im}}^{*}(\theta)^2
\Big),
\\
R_{P}^{*}(\theta)
&=
\frac{1}{P}\sum_{p=1}^{P}
\Big(
|\Delta_{p,\operatorname{Re}}^{*}(\theta)|
+
|\Delta_{p,\operatorname{Im}}^{*}(\theta)|
\Big),
\end{aligned}
\end{equation}
and define the exact loss
\begin{equation}
\label{eq:Loss_exact_section3}
\mathrm{Loss}_{P}^{*}(\theta)
=
\mathrm{MSE}_{P}^{*}(\theta)
+
\lambda\,R_{P}^{*}(\theta),
\qquad
\theta\in\widehat\Theta\subseteq\Theta.
\end{equation}

\subsection{Empirical-CF sampling error}
We first quantify how finite-sample error in the empirical-CF propagates into the Fourier-domain loss evaluated on the \mbox{training nodes}.
\begin{lemma}[Empirical-CF sampling error]
\label{lem:ecf_error_section3}
For the training nodes $\{\eta_p\}_{p=1}^{P}$, define
\begin{equation}
\label{eq:VWP_section3}
V_P
=
\frac{1}{P}\sum_{p=1}^{P}\big(1-|G(\eta_p)|^2\big),
\qquad
W_P
=
\frac{1}{P}\sum_{p=1}^{P}\sqrt{1-|G(\eta_p)|^2}.
\end{equation}
Then, for every fixed $\theta\in\widehat\Theta$,
\begin{equation}
\label{eq:ecf_mse_mae_exact_section3}
\mathbb{E}\!\left[\mathrm{MSE}_{M,P}(\theta)\right]
=
\mathrm{MSE}_{P}^{*}(\theta)
+
\frac{V_P}{M},
\qquad
\mathbb{E}\!\left[R_{M,P}(\theta)\right]
\le
R_{P}^{*}(\theta)
+
\sqrt{\frac{2}{M}}\,W_P.
\end{equation}
Consequently,
\begin{equation}
\label{eq:ecf_loss_exact_section3}
\mathbb{E}\!\left[\mathrm{Loss}_{M,P}(\theta)\right]
\le
\mathrm{Loss}_{P}^{*}(\theta)
+
\frac{V_P}{M}
+
\lambda\sqrt{\frac{2}{M}}\,W_P.
\end{equation}
\end{lemma}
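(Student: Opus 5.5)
The plan is to work node by node and then average over $p$. Fix $\theta\in\widehat\Theta$ and a node $\eta_p$, and write the empirical residuals as exact residuals plus a zero-mean noise term:
\[
\Delta_{p,\operatorname{Re}}(\theta)=\Delta_{p,\operatorname{Re}}^{*}(\theta)+\epsilon_{p,\operatorname{Re}},
\qquad
\Delta_{p,\operatorname{Im}}(\theta)=\Delta_{p,\operatorname{Im}}^{*}(\theta)+\epsilon_{p,\operatorname{Im}},
\]
where $\epsilon_{p,\operatorname{Re}}=\operatorname{Re}\widehat G_M(\eta_p)-\operatorname{Re}G(\eta_p)$ and $\epsilon_{p,\operatorname{Im}}$ is defined analogously. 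Because $\theta$ is fixed, hence deterministic and independent of the sample, the exact residuals are constants. By the unbiasedness in \eqref{eq:ecf_mean_var_section2}, $\mathbb{E}[\epsilon_{p,\operatorname{Re}}]=\mathbb{E}[\epsilon_{p,\operatorname{Im}}]=0$.

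\textbf{MSE identity.} Expand the squares. The cross terms $2\Delta^{*}_{p,\cdot}\,\mathbb{E}[\epsilon_{p,\cdot}]$ vanish, which gives
\[
\mathbb{E}\big[\Delta_{p,\operatorname{Re}}^2+\Delta_{p,\operatorname{Im}}^2\big]
=(\Delta^{*}_{p,\operatorname{Re}})^2+(\Delta^{*}_{p,\operatorname{Im}})^2+\mathbb{E}|\widehat G_M(\eta_p)-G(\eta_p)|^2 .
\]
The last term is $\operatorname{Var}(\widehat G_M(\eta_p))=(1-|G(\eta_p)|^2)/M$ by \eqref{eq:ecf_mean_var_section2}. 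To be safe I would re-derive this directly: $|e^{i\eta X}|=1$, so $\mathbb{E}|e^{i\eta X}-G|^2=1-|G|^2$, and independence of the $X_m$ divides this by $M$. Averaging over $p$ yields the identity $\mathbb{E}[\mathrm{MSE}_{M,P}(\theta)]=\mathrm{MSE}_P^{*}(\theta)+V_P/M$.

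\textbf{MAE bound.} By the triangle inequality, $|\Delta_{p,\operatorname{Re}}|\le|\Delta^{*}_{p,\operatorname{Re}}|+|\epsilon_{p,\operatorname{Re}}|$, and similarly for the imaginary part. It therefore suffices to show
\[
\mathbb{E}|\epsilon_{p,\operatorname{Re}}|+\mathbb{E}|\epsilon_{p,\operatorname{Im}}|\le\sqrt{2/M}\,\sqrt{1-|G(\eta_p)|^2}.
\]
First apply $a+b\le\sqrt{2}\sqrt{a^2+b^2}$. Then apply Jensen (or Cauchy--Schwarz) to get $(\mathbb{E}|\epsilon_{p,\operatorname{Re}}|)^2+(\mathbb{E}|\epsilon_{p,\operatorname{Im}}|)^2\le\mathbb{E}\epsilon_{p,\operatorname{Re}}^2+\mathbb{E}\epsilon_{p,\operatorname{Im}}^2=(1-|G(\eta_p)|^2)/M$. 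Averaging over $p$ produces the factor $\sqrt{2/M}\,W_P$.

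\textbf{Loss bound.} This follows by linearity of expectation from the two previous steps, multiplying the MAE bound by $\lambda\ge0$.

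I expect no real obstacle. The only points needing care are two. First, the variance identity in \eqref{eq:ecf_mean_var_section2} must be read as the complex variance $\mathbb{E}|\cdot-\mathbb{E}\cdot|^2$, which is what splits into the real and imaginary second moments. Second, the statement is for fixed, data-independent $\theta$; it does not cover the data-dependent minimizer $\widehat\theta$, for which the cross terms would not vanish.
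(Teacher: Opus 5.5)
Your proposal is correct and follows essentially the same route as the paper. Both arguments split the residual into the exact residual plus zero-mean empirical-CF noise with vanishing cross terms for the MSE identity, and both use the triangle inequality, the $\sqrt{2}$ inequality and Cauchy--Schwarz for the MAE bound. The one cosmetic difference is that the paper applies $|a|+|b|\le\sqrt{2}(a^2+b^2)^{1/2}$ pointwise, giving $\sqrt{2}\,|\widehat G_M(\eta_p)-G(\eta_p)|$ before taking expectations, whereas you take expectations first and apply Jensen componentwise; the resulting bound is the same.
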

\begin{proof}[Proof of Lemma~\ref{lem:ecf_error_section3}]
For each fixed node $\eta_p$, write
\[
\widehat G_M(\eta_p)-\widehat G(\eta_p;\theta)
=
\bigl(G(\eta_p)-\widehat G(\eta_p;\theta)\bigr)
+
\bigl(\widehat G_M(\eta_p)-G(\eta_p)\bigr).
\]
Taking squared moduli and then expectations yields
\begin{equation}
\label{eq:start}
\mathbb{E}\left[
\big|\widehat G_M(\eta_p)-\widehat G(\eta_p;\theta)\big|^2
\right]
=
\big|G(\eta_p)-\widehat G(\eta_p;\theta)\big|^2
+
\mathbb{E}\left[
\big|\widehat G_M(\eta_p)-G(\eta_p)\big|^2
\right],
\end{equation}
since the cross term vanishes by $\mathbb{E}[\widehat G_M(\eta_p)-G(\eta_p)]=0$ from \eqref{eq:ecf_mean_var_section2}. Moreover,
\begin{equation}
\label{eq:var}
\mathbb{E}\left[
\big|\widehat G_M(\eta_p)-G(\eta_p)\big|^2
\right]
=
\mathrm{Var}\!\left(\widehat G_M(\eta_p)\right)
=
\frac{1-|G(\eta_p)|^2}{M},
\end{equation}
again by \eqref{eq:ecf_mean_var_section2}.
Now sum \eqref{eq:start} over $p=1,\dots,P$ and divide by $P$. Using \eqref{eq:var}, we obtain
\[
\frac{1}{P}\sum_{p=1}^P
\mathbb{E}\left[
\big|\widehat G_M(\eta_p)-\widehat G(\eta_p;\theta)\big|^2
\right]
=
\frac{1}{P}\sum_{p=1}^P
\big|G(\eta_p)-\widehat G(\eta_p;\theta)\big|^2
+
\frac{1}{MP}\sum_{p=1}^P (1-|G(\eta_p)|^2).
\]
Using the definitions of $\mathrm{MSE}_{M,P}(\theta)$, $\mathrm{MSE}_{P}^{*}(\theta)$, and $V_P$ yields the \mbox{first identity in \eqref{eq:ecf_mse_mae_exact_section3}.}

For the MAE term in \eqref{eq:ecf_mse_mae_exact_section3}, the triangle inequality gives
\begin{equation}
\label{eq:second_term}
\begin{aligned}
|\Delta_{p,\operatorname{Re}}(\theta)|
+|\Delta_{p,\operatorname{Im}}(\theta)|
\le\;&
|\Delta_{p,\operatorname{Re}}^{*}(\theta)|
+
|\Delta_{p,\operatorname{Im}}^{*}(\theta)|
+
|\operatorname{Re}(\widehat G_M(\eta_p)-G(\eta_p))|
\\
&+
|\operatorname{Im}(\widehat G_M(\eta_p)-G(\eta_p))|.
\end{aligned}
\end{equation}
The last two terms on the right-hand side of \eqref{eq:second_term} are bounded by\footnote{We use the inequality $|a|+|b|\le \sqrt{2}\,(a^2+b^2)^{1/2}$.}
\begin{equation}
\label{eq:third_equation}
|\operatorname{Re}(\widehat G_M(\eta_p)-G(\eta_p))|
+
|\operatorname{Im}(\widehat G_M(\eta_p)-G(\eta_p))|
\le
\sqrt{2}\,\big|\widehat G_M(\eta_p)-G(\eta_p)\big|.
\end{equation}
Taking expectations both sides of \eqref{eq:third_equation} and applying Cauchy--Schwarz yields
\[
\mathbb{E}\big[
|\operatorname{Re}(\widehat G_M(\eta_p)-G(\eta_p))|
+
|\operatorname{Im}(\widehat G_M(\eta_p)-G(\eta_p))|
\big]
\le
\sqrt{2}\,
\big(
\mathbb{E}\big|\widehat G_M(\eta_p)-G(\eta_p)\big|^2
\big)^{1/2}.
\]
This and \eqref{eq:var} give
\begin{equation}
\label{eq:bound2}
\mathbb{E}\big[
|\operatorname{Re}(\widehat G_M(\eta_p)-G(\eta_p))|
+
|\operatorname{Im}(\widehat G_M(\eta_p)-G(\eta_p))|
\big]
\le
\sqrt{\frac{2}{M}}\,
\sqrt{1-|G(\eta_p)|^2}.
\end{equation}
Now sum \eqref{eq:second_term} over $p=1,\dots,P$ and divide by $P$. Using \eqref{eq:bound2} and the definitions of $R_{M,P}(\theta)$, $R_P^*(\theta)$, and $W_P$ yields the second bound in \eqref{eq:ecf_mse_mae_exact_section3}.
Finally, \eqref{eq:ecf_loss_exact_section3} follows from \eqref{eq:ecf_mse_mae_exact_section3} together with \eqref{eq:Loss_MP_section2} and \eqref{eq:Loss_exact_section3}.
\end{proof}

\subsection{Discretization and truncation in the Fourier domain}
The next two lemmas quantify the discretization and truncation errors in the Fourier domain.
\begin{lemma}[Discretization]
\label{lem:discretization_section3}
Fix $\eta'>0$. For $\theta\in\Theta$, define
\[
F_\theta(\eta)=|G(\eta)-\widehat G(\eta;\theta)|^2,
\qquad \eta\in[-\eta',\eta'].
\]
Then there exists a finite constant $C'=C'(\eta')>0$, independent of $P$ and $\theta$, such that
\begin{equation}
\label{eq:disc_bound_section3}
\Big|
\sum_{p=1}^{P}\delta_p F_\theta(\eta_p)
-
\int_{-\eta'}^{\eta'}F_\theta(\eta)\,d\eta
\Big|
\le
\frac{C' C_1^2}{P},
\qquad \forall\,\theta\in\Theta.
\end{equation}
\end{lemma}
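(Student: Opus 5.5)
We treat $\sum_{p}\delta_p F_\theta(\eta_p)$ as a Riemann sum, with one node per cell, for the integral of $F_\theta$ over $[-\eta',\eta']$, and use a Lipschitz bound on $F_\theta$ that is uniform in $\theta\in\Theta$. We split the integral over the partition cells and write
\[
\sum_{p=1}^{P}\delta_p F_\theta(\eta_p)-\int_{-\eta'}^{\eta'}F_\theta(\eta)\,d\eta
=\sum_{p=1}^{P}\int_{\xi_{p-1}}^{\xi_p}\big(F_\theta(\eta_p)-F_\theta(\eta)\big)\,d\eta .
\]
If $|F_\theta(\eta')-F_\theta(\eta'')|\le L_F|\eta'-\eta''|$ on $[-\eta',\eta']$ with $L_F$ independent of $\theta$, then each cell contributes at most $L_F\int_{\xi_{p-1}}^{\xi_p}|\eta_p-\eta|\,d\eta\le L_F\delta_p^2$. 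Summing over cells and using \eqref{eq:eta_partition_section2}, the total is at most $L_F\,P\,\delta_{\max}^2=L_F C_1^2/P$. This gives \eqref{eq:disc_bound_section3} with $C'=L_F$.

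The main step is a uniform Lipschitz constant for $F_\theta$. We use the factorization
\[
|a|^2-|b|^2=(|a|-|b|)(|a|+|b|),
\]
which gives
\[
|F_\theta(\eta')-F_\theta(\eta'')|\le \big(|D(\eta')|+|D(\eta'')|\big)\,|D(\eta')-D(\eta'')|,
\qquad D=G-\widehat G(\cdot;\theta).
\]
Both $G$ and $\widehat G(\cdot;\theta)$ are characteristic functions, so each has modulus at most $1$. Hence $|D|\le 2$ and the prefactor is at most $4$. It then suffices to show that $D$ is Lipschitz on $[-\eta',\eta']$ with a constant independent of $\theta$. For $G$ this is exactly Assumption~\ref{ass:modelling}~(A3) with the constant $L_{[-\eta',\eta']}$.

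For $\widehat G(\cdot;\theta)$ we differentiate the closed form \eqref{eq:GL_cf_section2} and bound each term using the restrictions in \eqref{eq:Theta_GL_section2}. The Gaussian term has derivative
\[
\frac{d}{d\eta}e^{i\eta\mu_k-\sigma_k^2\eta^2/2}=(i\mu_k-\sigma_k^2\eta)\,e^{i\eta\mu_k-\sigma_k^2\eta^2/2},
\]
whose modulus is at most $\overline\mu+\sigma_{\max}^2\eta'$. The Laplace term has derivative
\[
\frac{i\nu_\ell e^{i\eta\nu_\ell}}{1+b_\ell^2\eta^2}-\frac{2b_\ell^2\eta\,e^{i\eta\nu_\ell}}{(1+b_\ell^2\eta^2)^2}.
\]
Its modulus is at most $\overline\mu+1$, because $2b^2|\eta|/(1+b^2\eta^2)^2\le b$ by AM-GM, so the bound is at most $\overline\mu+b_{\max}$ in general. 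Since the mixture weights are nonnegative and sum to one, the convex combination preserves the bound:
\[
\sup_{|\eta|\le\eta'}\big|\partial_\eta\widehat G(\eta;\theta)\big|\le \overline\mu+\max\{\sigma_{\max}^2\eta',\,b_{\max}\}=:L_{\widehat G}.
\]
This bound is independent of $\theta$ and of the component counts $(K_{\mathcal G},K_{\mathcal L})$. Therefore $L_F=4\big(L_{[-\eta',\eta']}+L_{\widehat G}\big)$ works, and $C'(\eta')$ depends only on $\eta'$ and the bounds defining $\Theta$.

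The only subtle point is that uniformity in $\theta$ relies on the bounded set $\Theta$ and on the nonnegative weights summing to one; the remaining steps are routine.
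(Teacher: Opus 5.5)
Your proposal is correct and follows the paper's proof: it uses the same cellwise estimate $\bigl|\delta_pF_\theta(\eta_p)-\int_{\xi_{p-1}}^{\xi_p}F_\theta(\eta)\,d\eta\bigr|\le C'\delta_p^2$, summed with $\delta_p\le C_1/P$. Your factorization and derivative bounds supply the uniform Lipschitz property of $F_\theta$ that the paper only asserts. One small slip: the Laplace-term derivative is not bounded by $\overline\mu+1$ in general (your AM--GM step only gives $\overline\mu+b_\ell$), but the constant $\overline\mu+b_{\max}$ you actually use in $L_{\widehat G}$ is right.
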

For a proof of Lemma~\ref{lem:discretization_section3}, see Appendix~\ref{app:discretization_section3}.
\begin{lemma}[Fourier-domain truncation error]
\label{lem:truncation_section3}
For every $\varepsilon>0$, the training interval $[-\eta',\eta']$ can be chosen sufficiently large so that
\begin{equation}
\label{eq:trunc_section3}
\int_{\Rbb\setminus[-\eta',\eta']} |G(\eta)|^2\,d\eta < \varepsilon,
\qquad
\sup_{\theta\in\Theta}
\int_{\Rbb\setminus[-\eta',\eta']} |\widehat G(\eta;\theta)|^2\,d\eta < \varepsilon.
\end{equation}
Consequently, for every $\theta\in\Theta$,
\begin{equation}
\label{eq:tail_bound_section3}
\int_{\Rbb\setminus[-\eta',\eta']} |G(\eta)-\widehat G(\eta;\theta)|^2\,d\eta
<
4\varepsilon.
\end{equation}
\end{lemma}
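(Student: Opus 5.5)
The lemma has three parts: a tail bound for the target CF $G$, a bound for the model CFs $\widehat G(\cdot;\theta)$ that is uniform in $\theta\in\Theta$, and the combined estimate \eqref{eq:tail_bound_section3}. I will treat them in that order.

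\textbf{Target tail.} By Remark~\ref{rem:assumptions_section2}, Assumption~(A2) gives $g\in L_2(\Rbb)$. Plancherel then gives $G\in L_2(\Rbb)$. Hence $\int_{|\eta|>\eta'}|G|^2\,d\eta\to0$ as $\eta'\to\infty$, by dominated (or monotone) convergence, and we can pick $\eta'_1$ with tail mass below $\varepsilon$.

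\textbf{Model tail, uniform in $\theta$.} This is the only real issue, and it is handled by the lower scale bounds in \eqref{eq:Theta_GL_section2}. From \eqref{eq:GL_cf_section2}, using $|e^{i\eta\mu}|=1$, $\sigma_k\ge\sigma_{\min}$, $b_\ell\ge b_{\min}$ and $\sum\beta=1$, we get the pointwise envelope
\[
|\widehat G(\eta;\theta)|
\le
\sum_k\beta_k^{(\mathcal G)}e^{-\sigma_{\min}^2\eta^2/2}
+\sum_\ell\beta_\ell^{(\mathcal L)}\frac{1}{1+b_{\min}^2\eta^2}
\le
H(\eta):=\max\Big\{e^{-\sigma_{\min}^2\eta^2/2},\,\frac{1}{1+b_{\min}^2\eta^2}\Big\}.
\]
The envelope $H$ does not depend on $\theta$. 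It also lies in $L_2(\Rbb)$: the Gaussian term is square integrable, and $(1+b_{\min}^2\eta^2)^{-2}$ is integrable because it decays like $\eta^{-4}$. Therefore
\[
\sup_{\theta\in\Theta}\int_{|\eta|>\eta'}|\widehat G(\eta;\theta)|^2\,d\eta\le\int_{|\eta|>\eta'}H^2\,d\eta\to0 .
\]
This limit can even be made explicit: bound $H^2\le e^{-\sigma_{\min}^2\eta^2}+(1+b_{\min}^2\eta^2)^{-2}$. The Laplace part then contributes at most $2/(3b_{\min}^4\eta'^3)$. Choose $\eta'_2$ with this tail below $\varepsilon$, and take $\eta'=\max(\eta'_1,\eta'_2)$; both strict inequalities in \eqref{eq:trunc_section3} then hold.

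\textbf{Combined estimate.} Use $|a-b|^2\le 2|a|^2+2|b|^2$ pointwise and integrate over $\Rbb\setminus[-\eta',\eta']$. This gives a bound of $2\varepsilon+2\varepsilon=4\varepsilon$, strictly, which is \eqref{eq:tail_bound_section3}.
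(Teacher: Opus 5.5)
Your proof is correct and follows the paper's argument in Appendix~\ref{app:truncation_section3}: $G\in L_2(\Rbb)$ via (A2) and Plancherel for the target tail, a $\theta$-independent square-integrable envelope built from $\sigma_{\min}$ and $b_{\min}$ for the uniform model tail, and $|a-b|^2\le 2|a|^2+2|b|^2$ for the combined $4\varepsilon$ bound. The only differences are cosmetic: you bound the convex combination by the maximum of the two kernel envelopes rather than by their sum, and you make the Laplace tail explicit as $2/(3b_{\min}^4\eta'^3)$.
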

A proof of Lemma~\ref{lem:truncation_section3}
is given in Appendix~\ref{app:truncation_section3}.

\subsection{Main {\dpurple{squared}} $L_2$ {\dpurple{density-}}error bound}
We now state the main theorem giving an expected {\dpurple{squared}} $L_2$ {\dpurple{density-}}error bound for the direct i.i.d.\ sampling setting.
\begin{theorem}[Data-driven {\dpurple{expected squared}} $L_2$ error bound]
\label{thm:error_decomposition_section3}
Fix $\varepsilon>0$, and choose $[-\eta',\eta']$ sufficiently large so that Lemma~\ref{lem:truncation_section3} applies. Let $\widehat\theta\in\widehat\Theta$ be the parameter obtained by minimizing \eqref{eq:Loss_MP_section2}, and suppose that, for some {\dpurple{deterministic}} $\varepsilon_1>0$,
\begin{equation}
\label{eq:empirical_opt_section3}
\mathrm{Loss}_{M,P}(\widehat\theta)\le \varepsilon_1
\qquad {\dpurple{\text{almost surely}.}}
\end{equation}
Then
\begin{equation}
\label{eq:main_error_bound_section3}
{\dpurple{
\mathbb{E}\!\Big[
\int_{\Rbb}\big|g(x)-\widehat g(x;\widehat\theta)\big|^2\,dx
\Big]
<
\frac{1}{2\pi}
\Big(
4\varepsilon
+
C_1
\Big(
2\varepsilon_1
+
\frac{2V_P}{M}
\Big)
+
\frac{C' C_1^2}{P}
\Big).
}}
\end{equation}
{\dpurple{Here, $C_1$ is defined in \eqref{eq:eta_partition_section2}, $V_P$ in \eqref{eq:VWP_section3}, and $C'$ in Lemma~\ref{lem:discretization_section3}.}}
\end{theorem}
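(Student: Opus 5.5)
Start from the Plancherel identity \eqref{eq:GL_plancherel_section2} with $\theta=\widehat\theta$, which is legitimate pathwise since $\widehat\theta\in\widehat\Theta\subseteq\Theta$. Then split the Fourier integral into the tail $\Rbb\setminus[-\eta',\eta']$ and the window $[-\eta',\eta']$. By Lemma~\ref{lem:truncation_section3}, whose bound is uniform over $\Theta$, the tail contributes strictly less than $4\varepsilon$ for every realization. This accounts for the $4\varepsilon$ term and the strict inequality.

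On the window, apply Lemma~\ref{lem:discretization_section3} with $\theta=\widehat\theta$. Its constant $C'$ is independent of $\theta$, so the bound holds pathwise even though $\widehat\theta$ is random:
\[
\int_{-\eta'}^{\eta'}F_{\widehat\theta}(\eta)\,d\eta\le \sum_{p=1}^P\delta_p F_{\widehat\theta}(\eta_p)+\frac{C'C_1^2}{P}.
\]
Using $\delta_p\le \delta_{\max}=C_1/P$ from \eqref{eq:eta_partition_section2}, the Riemann sum is at most $C_1\,\frac1P\sum_p|G(\eta_p)-\widehat G(\eta_p;\widehat\theta)|^2=C_1\,\mathrm{MSE}^*_P(\widehat\theta)$.

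Next I bound $\mathrm{MSE}^*_P(\widehat\theta)$. Lemma~\ref{lem:ecf_error_section3} cannot be applied directly, because it holds only for fixed $\theta$ and $\widehat\theta$ depends on the sample; this is the main obstacle. I will therefore avoid that lemma and argue pathwise. At each node, $|a+b|^2\le 2|a|^2+2|b|^2$ gives
\[
|G(\eta_p)-\widehat G(\eta_p;\widehat\theta)|^2\le 2|\widehat G_M(\eta_p)-\widehat G(\eta_p;\widehat\theta)|^2+2|\widehat G_M(\eta_p)-G(\eta_p)|^2 .
\]
Averaging over $p$ yields
\[
\mathrm{MSE}^*_P(\widehat\theta)\le 2\,\mathrm{MSE}_{M,P}(\widehat\theta)+\frac2P\sum_p|\widehat G_M(\eta_p)-G(\eta_p)|^2 .
\]
Since $\lambda R_{M,P}\ge0$, we have $\mathrm{MSE}_{M,P}(\widehat\theta)\le \mathrm{Loss}_{M,P}(\widehat\theta)\le\varepsilon_1$ almost surely by \eqref{eq:empirical_opt_section3}. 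The second term no longer involves $\widehat\theta$, so its expectation equals $\frac{2}{P}\sum_p(1-|G(\eta_p)|^2)/M=2V_P/M$ by \eqref{eq:var}.

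Finally, I take expectations of the combined pathwise inequality. Measurability is not an issue, since all quantities are nonnegative. Dividing by $2\pi$ gives exactly \eqref{eq:main_error_bound_section3}, and the strictness is inherited from the truncation bound.
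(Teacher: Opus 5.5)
Your proposal is correct and follows the paper's proof essentially step for step: Plancherel, the uniform truncation bound for the tail, and the $\theta$-uniform discretization lemma with $\delta_p\le C_1/P$ to reach $C_1\,\mathrm{MSE}_P^{*}(\widehat\theta)$. It then uses the pathwise splitting $|a+b|^2\le 2|a|^2+2|b|^2$ rather than Lemma~\ref{lem:ecf_error_section3} (precisely because $\widehat\theta$ is data-dependent), together with $\mathrm{MSE}_{M,P}(\widehat\theta)\le\varepsilon_1$ a.s.\ and \eqref{eq:var}. The only quibble is that nonnegativity makes the expectations well defined in $[0,\infty]$ but does not by itself give measurability; like the paper, you implicitly need $\widehat\theta$ to be a measurable function of the sample.
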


\begin{proof}[Proof of Theorem~\ref{thm:error_decomposition_section3}]
By Plancherel's theorem,
\begin{align}
\label{eq:Plancherel}
2\pi\int_{\Rbb}\big|g(x)-\widehat g(x;\widehat\theta)\big|^2\,dx
&=
\int_{\Rbb}\big|G(\eta)-\widehat G(\eta;\widehat\theta)\big|^2\,d\eta
\nonumber
\\
&=
\int_{\Rbb\setminus[-\eta',\eta']} \!\!\big|G(\eta)-\widehat G(\eta;\widehat\theta)\big|^2\,d\eta
+
\int_{-\eta'}^{\eta'}\big|G(\eta)-\widehat G(\eta;\widehat\theta)\big|^2\,d\eta.
\end{align}
The first term on the right-hand side is bounded by $4\varepsilon$ by Lemma~\ref{lem:truncation_section3}. For the second term, with
\[
F_{\widehat\theta}(\eta)=|G(\eta)-\widehat G(\eta;\widehat\theta)|^2,
\]
Lemma~\ref{lem:discretization_section3} gives
\[
\int_{-\eta'}^{\eta'}F_{\widehat\theta}(\eta)\,d\eta
\le
\sum_{p=1}^{P}\delta_p F_{\widehat\theta}(\eta_p)
+
\frac{C'C_1^2}{P}
\le
\frac{C_1}{P}\sum_{p=1}^{P}F_{\widehat\theta}(\eta_p)
+
\frac{C'C_1^2}{P},
\]
since $\delta_p\le C_1/P$. Therefore
\begin{equation}
\label{eq:boundF}
{\dpurple{
\int_{-\eta'}^{\eta'}F_{\widehat\theta}(\eta)\,d\eta
\le
C_1\,\mathrm{MSE}_{P}^{*}(\widehat\theta)
+
\frac{C'C_1^2}{P}.
}}
\end{equation}

{\dpurple{It remains to bound $\mathbb E[\mathrm{MSE}_{P}^{*}(\widehat\theta)]$.}} For each $p$,
\[
|G(\eta_p)-\widehat G(\eta_p;\widehat\theta)|^2
\le
2|\widehat G_M(\eta_p)-\widehat G(\eta_p;\widehat\theta)|^2
+
2|\widehat G_M(\eta_p)-G(\eta_p)|^2.
\]
{\dpurple{Averaging over $p=1,\ldots,P$ gives}}
\[
{\dpurple{
\mathrm{MSE}_{P}^{*}(\widehat\theta)
\le
2\,\mathrm{MSE}_{M,P}(\widehat\theta)
+
\frac{2}{P}\sum_{p=1}^{P}
|\widehat G_M(\eta_p)-G(\eta_p)|^2.
}}
\]
{\dpurple{Since $\lambda\ge0$, \eqref{eq:empirical_opt_section3} implies, almost surely,}}
\[
{\dpurple{
\mathrm{MSE}_{M,P}(\widehat\theta)
\le
\mathrm{Loss}_{M,P}(\widehat\theta)
\le
\varepsilon_1.
}}
\]
{\dpurple{Taking expectations and using \eqref{eq:var} yields}}
\begin{equation}
\label{eq:exact_mse_bound_section3}
{\dpurple{
\mathbb E\!\left[\mathrm{MSE}_{P}^{*}(\widehat\theta)\right]
\le
2\varepsilon_1
+
\frac{2V_P}{M}.
}}
\end{equation}
{\dpurple{Combining \eqref{eq:Plancherel}, \eqref{eq:boundF}, and \eqref{eq:exact_mse_bound_section3} yields \eqref{eq:main_error_bound_section3}.}}
\end{proof}

\begin{corollary}[{\dpurple{Expected $L_2$ norm}}]
\label{cor:expected_L2_norm_section3}
{\dpurple{Under the conditions of Theorem~\ref{thm:error_decomposition_section3},}}
\begin{equation}
\label{eq:expected_L2_norm_section3}
{\dpurple{
\mathbb E\!\left[
\left\|g-\widehat g(\cdot;\widehat\theta)\right\|_{L_2}
\right]
<
\left[
\frac{1}{2\pi}
\Big(
4\varepsilon
+
C_1
\Big(
2\varepsilon_1
+
\frac{2V_P}{M}
\Big)
+
\frac{C'C_1^2}{P}
\Big)
\right]^{1/2}.
}}
\end{equation}
\end{corollary}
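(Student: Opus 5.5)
The corollary follows from Theorem~\ref{thm:error_decomposition_section3} by Jensen's inequality for the concave square-root map. Write
\[
Z=\int_{\Rbb}\big|g(x)-\widehat g(x;\widehat\theta)\big|^2\,dx=\big\|g-\widehat g(\cdot;\widehat\theta)\big\|_{L_2}^2 ,
\]
which is a nonnegative random variable. It is measurable as a function of the sample and of any training randomness, since $\widehat\theta$ is. It is also bounded, because $\|g\|_{L_2}^2\le\|g\|_{L_\infty}$ by Remark~\ref{rem:assumptions_section2}, and $\|\widehat g(\cdot;\theta)\|_{L_2}$ is bounded uniformly over $\Theta$ since $\sigma_k\ge\sigma_{\min}$ and $b_\ell\ge b_{\min}$. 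Hence $\mathbb E[Z]$ and $\mathbb E[Z^{1/2}]$ are finite and well defined.

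The argument then has two steps. First, because $t\mapsto t^{1/2}$ is concave on $[0,\infty)$, Jensen's inequality gives $\mathbb E[Z^{1/2}]\le(\mathbb E[Z])^{1/2}$. Equivalently, by Cauchy--Schwarz, $\mathbb E[Z^{1/2}\cdot 1]\le(\mathbb E[Z])^{1/2}(\mathbb E[1])^{1/2}$. Second, \eqref{eq:main_error_bound_section3} states $\mathbb E[Z]<B$, where $B$ denotes its right-hand side. Since $t\mapsto t^{1/2}$ is strictly increasing, $(\mathbb E[Z])^{1/2}<B^{1/2}$. Chaining the two inequalities gives $\mathbb E\big[\|g-\widehat g(\cdot;\widehat\theta)\|_{L_2}\big]<B^{1/2}$, which is \eqref{eq:expected_L2_norm_section3}.

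The only point needing care is that the strict inequality survives. Jensen's step is non-strict, but the theorem's bound is strict, and a strictly increasing map preserves it, so the composite inequality is strict. No other obstacle is expected.
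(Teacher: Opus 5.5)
Your proposal is correct and matches the paper's proof, which applies Cauchy--Schwarz to get $\mathbb E[\|g-\widehat g(\cdot;\widehat\theta)\|_{L_2}]\le(\mathbb E[\|g-\widehat g(\cdot;\widehat\theta)\|_{L_2}^2])^{1/2}$ and then invokes Theorem~\ref{thm:error_decomposition_section3}. Your Jensen step is the same inequality; the integrability check and the remark that the strictly increasing square root keeps the inequality strict are valid details the paper leaves implicit.
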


\begin{proof}
{\dpurple{By Cauchy--Schwarz,}}
$
{\dpurple{
\mathbb E\!\big[
\big\|g-\widehat g(\cdot;\widehat\theta)\big\|_{L_2}
\big]
\le
\big(
\mathbb E\!\big[
\big\|g-\widehat g(\cdot;\widehat\theta)\big\|_{L_2}^2
\big]
\big)^{1/2}.
}}
$
{\dpurple{The result follows from Theorem~\ref{thm:error_decomposition_section3}.}}
\end{proof}

\subsection{{\dpurple{Discussion of the error bounds}}}
{\dpurple{Lemma~\ref{lem:ecf_error_section3} quantifies the effect of empirical-CF sampling on the complete MSE-plus-MAE training loss: the terms $V_P/M$ and $W_P/\sqrt{M}$ arise from its MSE and MAE components, respectively. Theorem~\ref{thm:error_decomposition_section3} uses the MSE component to bound the expected squared $L_2$ density error. Its four contributions are Fourier-domain truncation, represented by $\varepsilon$; empirical training error, represented by $\varepsilon_1$; discretization error, represented by $C'C_1^2/P$; and empirical-CF sampling error, represented by $V_P/M$.}}

{\dpurple{The quantities $V_P$ and $W_P$ measure, respectively, the average squared and absolute empirical-CF fluctuations across the training nodes. Both depend on $|G(\eta_p)|$: they are small near the origin and typically more pronounced at larger frequencies. Since $|G(\eta_p)|\le1$ for every $p$,}}
\[
{\dpurple{
0\le V_P\le1,
\qquad
0\le W_P\le1.
}}
\]
{\dpurple{Thus the empirical-CF contribution to the expected squared $L_2$ density error in Theorem~\ref{thm:error_decomposition_section3} is of order $M^{-1}$, uniformly in $P$. Corollary~\ref{cor:expected_L2_norm_section3} converts this result into a bound for the expected $L_2$ norm. In particular, an $M^{-1/2}$ contribution to the expected $L_2$ norm follows when the truncation, empirical training, and discretization terms are controlled at order $M^{-1}$ or smaller; the theorem does not impose such rates on these remaining terms by itself.}}

\section{Pseudo-samples from dependent data}
\label{sec:bootstrap}
Sections~\ref{sc:NN_Fourier}--\ref{sec:error} treat the idealized setting of direct i.i.d.\ sampling from the target law. In many applications, however, the available observations form a dependent time series, while the object of interest is the law or transition density over a fixed horizon $\tau>0$. A practical way to recover conditionally i.i.d.\ training data is to construct pseudo-samples from the observed history by a suitable resampling procedure; see, for example, \cite{lahiri2003resampling}. Representative techniques include the moving block bootstrap \cite{kunsch1989jackknife}, the stationary bootstrap \cite{politis1994stationary}, subsampling \cite{politis1999subsampling}, and, when an explicit parametric model is postulated, \mbox{the parametric bootstrap \cite{efron1993bootstrap,davison1997bootstrap}.}

A key advantage of the Fourier-mixture framework is that dependence in the observed data can be handled without changing the basic structure of the error analysis: conditional on the observed history, the argument of Section~\ref{sec:error} carries over, and the only new error source is a pseudo-law approximation term measuring the discrepancy between the resampling-based law and the true $\tau$-horizon law.

\subsection{Resampling-based pseudo-sampling}
\label{subsec:bootstrap_setup}

Fix a horizon $\tau>0$, interpreted as the length of one decision period or one transition step. Let $X_\tau$ denote a generic true increment over horizon $\tau$, and let $\mathcal L_\tau=\mathcal L(X_\tau)$ be its law. We assume that this target law on $\Rbb$ admits a density $g$ and CF $G$, exactly as in Assumption~\ref{ass:modelling}.

Let $\mathcal H_N=(Y_1,\ldots,Y_N)$ denote an observed dependent sample recorded at the sampling frequency available in the data. This sampling frequency need not coincide with the target horizon $\tau$. A resampling procedure applied to $\mathcal H_N$ is intended to approximate the target law $\mathcal L_\tau$ by a random pseudo-law $\mathcal L_{\tau,N}^{\dagger}$. For example, when the data are observed at a finer frequency than $\tau$, one may resample shorter-horizon observations by a dependence-preserving scheme and then aggregate them over intervals of length $\tau$.

We write
\[
\mathbb E^{\dagger}[\cdot]=\mathbb E[\cdot\,|\,\mathcal H_N],
\qquad
\operatorname{Var}^{\dagger}(\cdot)=\operatorname{Var}(\cdot\,|\,\mathcal H_N)
\]
for conditional expectation and variance given the observed history. {\dpurple{Throughout this section, conditional expectations, variances, and almost-sure statements are with respect to the pseudo-sample generation and, when applicable, any randomness in the training procedure, conditional on $\mathcal H_N$.}} The basic conditional assumption is that, given $\mathcal H_N$, the pseudo-samples
$X_1^{\dagger},\ldots,X_M^{\dagger}$ are i.i.d.\ from the common pseudo-law $\mathcal L_{\tau,N}^{\dagger}$, namely
\[
X_1^{\dagger},\ldots,X_M^{\dagger}\,\big|\,\mathcal H_N
\stackrel{\mathrm{i.i.d.}}{\sim}
\mathcal L_{\tau,N}^{\dagger},
\qquad
\mathcal L_{\tau,N}^{\dagger}\approx\mathcal L_\tau.
\]
Let $X^{\dagger}$ denote a generic draw from $\mathcal L_{\tau,N}^{\dagger}$. The corresponding pseudo-CF is
\begin{equation*}
G_N^{\dagger}(\eta)
=
\mathbb E^{\dagger}\!\left[e^{i\eta X^{\dagger}}\right],
\qquad \eta\in\Rbb.
\end{equation*}
The associated empirical pseudo-CF is
\begin{equation}
\label{eq:hatG_boot}
\widehat G_M^{\dagger}(\eta)
=
\frac{1}{M}\sum_{m=1}^{M} e^{i\eta X_m^{\dagger}}.
\end{equation}
On the training nodes $\{\eta_p\}_{p=1}^{P}\subset[-\eta',\eta']$, we define
\begin{equation}
\label{eq:bootstrap_bias_terms}
\begin{aligned}
B_{P,N}^{(1)}
&=
\frac{1}{P}\sum_{p=1}^{P}
\Big(
\big|\operatorname{Re}(G(\eta_p)-G_N^{\dagger}(\eta_p))\big|
+
\big|\operatorname{Im}(G(\eta_p)-G_N^{\dagger}(\eta_p))\big|
\Big),
\\
B_{P,N}^{(2)}
&=
\frac{1}{P}\sum_{p=1}^{P}\big|G(\eta_p)-G_N^{\dagger}(\eta_p)\big|^2.
\end{aligned}
\end{equation}
These quantities are random through their dependence on $\mathcal H_N$ and measure how accurately the pseudo-law reproduces the target law on the training Fourier grid.

The Fourier-domain pseudo-sample loss is defined exactly as in Section~\ref{subsec:training_loss_section2}, but with $\widehat G_M$ replaced by $\widehat G_M^{\dagger}$:
\begin{equation}
\label{eq:Loss_boot}
\mathrm{Loss}_{M,P}^{\dagger}(\theta)
=
\mathrm{MSE}_{M,P}^{\dagger}(\theta)
+
\lambda\,R_{M,P}^{\dagger}(\theta),
\qquad \theta\in\widehat\Theta\subseteq\Theta,
\end{equation}
where $\lambda\ge 0$ and 
\begin{equation}
\label{eq:MSE_boot}
\begin{aligned}
\mathrm{MSE}_{M,P}^{\dagger}(\theta)
&=
\frac{1}{P}\sum_{p=1}^{P}
\Big(
\Delta_{p,\operatorname{Re}}^{\dagger}(\theta)^2
+
\Delta_{p,\operatorname{Im}}^{\dagger}(\theta)^2
\Big),
\\
R_{M,P}^{\dagger}(\theta)
&=
\frac{1}{P}\sum_{p=1}^{P}
\Big(
|\Delta_{p,\operatorname{Re}}^{\dagger}(\theta)|
+
|\Delta_{p,\operatorname{Im}}^{\dagger}(\theta)|
\Big),
\end{aligned}
\end{equation}
with
\[
\Delta_{p,\operatorname{Re}}^{\dagger}(\theta)
=
\operatorname{Re}\widehat G_M^{\dagger}(\eta_p)
-
\operatorname{Re}\widehat G(\eta_p;\theta),
\qquad
\Delta_{p,\operatorname{Im}}^{\dagger}(\theta)
=
\operatorname{Im}\widehat G_M^{\dagger}(\eta_p)
-
\operatorname{Im}\widehat G(\eta_p;\theta).
\]
Similarly, define the exact loss associated with $\mathcal L_{\tau,N}^{\dagger}$ by replacing $\widehat G_M^{\dagger}$ with $G_N^{\dagger}$:
\begin{equation}
\label{eq:Loss_boot_exact}
\mathrm{Loss}_{P,N}^{\dagger,*}(\theta)
=
\mathrm{MSE}_{P,N}^{\dagger,*}(\theta)
+
\lambda\,R_{P,N}^{\dagger,*}(\theta),
\end{equation}
where $\mathrm{MSE}_{P,N}^{\dagger,*}(\theta)$ and $R_{P,N}^{\dagger,*}(\theta)$ are defined analogously to \eqref{eq:MSE_boot}, with residuals
\[
\Delta_{p,\operatorname{Re}}^{\dagger,*}(\theta)
=
\operatorname{Re}G_N^{\dagger}(\eta_p)
-
\operatorname{Re}\widehat G(\eta_p;\theta),
\qquad
\Delta_{p,\operatorname{Im}}^{\dagger,*}(\theta)
=
\operatorname{Im}G_N^{\dagger}(\eta_p)
-
\operatorname{Im}\widehat G(\eta_p;\theta).
\]

\subsection{Error analysis under resampling-based pseudo-sampling}
\label{subsec:bootstrap_error}
{\dpurple{We now separate the perturbation analysis for the complete MSE-plus-MAE pseudo-sample loss from the physical-space density-error analysis. The first result is the pseudo-sample analogue of Lemma~\ref{lem:ecf_error_section3}; the sharper density-error theorem that follows uses only the MSE component. }}
\begin{lemma}[Pseudo-sample unbiasedness and variance]
\label{lem:bootstrap_ecf}
For each fixed $\eta\in\Rbb$,
\begin{equation}
\label{eq:boot_mean}
\mathbb E^{\dagger}\!\left[\widehat G_M^{\dagger}(\eta)\right]
=
G_N^{\dagger}(\eta),
\qquad
\operatorname{Var}^{\dagger}\!\left(\widehat G_M^{\dagger}(\eta)\right)
=
\frac{1-|G_N^{\dagger}(\eta)|^2}{M}.
\end{equation}
Moreover, with
\begin{equation}
\label{eq:VW_boot}
V_{P,N}^{\dagger}
=
\frac{1}{P}\sum_{p=1}^{P}\big(1-|G_N^{\dagger}(\eta_p)|^2\big),
\qquad
W_{P,N}^{\dagger}
=
\frac{1}{P}\sum_{p=1}^{P}\sqrt{1-|G_N^{\dagger}(\eta_p)|^2},
\end{equation}
we have, for every fixed $\theta\in\widehat\Theta$,
\begin{equation}
\label{eq:boot_mse_mae}
\mathbb E^{\dagger}[\mathrm{MSE}_{M,P}^{\dagger}(\theta)]
=
\mathrm{MSE}_{P,N}^{\dagger,*}(\theta)
+
\frac{V_{P,N}^{\dagger}}{M},
\quad
\mathbb E^{\dagger}[R_{M,P}^{\dagger}(\theta)]
\le
R_{P,N}^{\dagger,*}(\theta)
+
\sqrt{\frac{2}{M}}\,W_{P,N}^{\dagger}.
\end{equation}
Consequently,
\begin{equation}
\label{eq:boot_loss_bound}
\mathbb E^{\dagger}\!\left[\mathrm{Loss}_{M,P}^{\dagger}(\theta)\right]
\le
\mathrm{Loss}_{P,N}^{\dagger,*}(\theta)
+
\frac{V_{P,N}^{\dagger}}{M}
+
\lambda\sqrt{\frac{2}{M}}\,W_{P,N}^{\dagger}.
\end{equation}
\end{lemma}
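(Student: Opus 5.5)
The plan is to reproduce the argument of Lemma~\ref{lem:ecf_error_section3} verbatim, but carried out under the conditional law given $\mathcal H_N$. The single structural input is the conditional i.i.d.\ assumption: given $\mathcal H_N$, the pseudo-samples $X_1^{\dagger},\ldots,X_M^{\dagger}$ are i.i.d.\ from $\mathcal L_{\tau,N}^{\dagger}$. Everything else is deterministic algebra in $\theta$, which is fixed and hence treated as nonrandom, and $\mathcal H_N$-measurable quantities such as $G_N^{\dagger}$ behave as constants under $\mathbb E^{\dagger}$.

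\textbf{Step 1: conditional mean and variance.} Fix $\eta$ and set $Z_m=e^{i\eta X_m^{\dagger}}$. Conditionally on $\mathcal H_N$, the $Z_m$ are i.i.d.\ with $\mathbb E^{\dagger}[Z_m]=G_N^{\dagger}(\eta)$ by definition of the pseudo-CF, and $|Z_m|=1$. Linearity gives the mean identity in \eqref{eq:boot_mean}.

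For the variance I use the complex convention $\operatorname{Var}^{\dagger}(Z)=\mathbb E^{\dagger}|Z-\mathbb E^{\dagger}Z|^2$, which is the one the paper uses in \eqref{eq:var}. Then
\[
\operatorname{Var}^{\dagger}(Z_m)=\mathbb E^{\dagger}|Z_m|^2-|G_N^{\dagger}(\eta)|^2=1-|G_N^{\dagger}(\eta)|^2 .
\]
Conditional independence makes the cross terms $\mathbb E^{\dagger}[(Z_m-G_N^{\dagger})\overline{(Z_{m'}-G_N^{\dagger})}]$ vanish for $m\ne m'$. Hence the variance of the average is $(1-|G_N^{\dagger}(\eta)|^2)/M$.

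\textbf{Step 2: the MSE identity.} At each node I decompose
\[
\widehat G_M^{\dagger}(\eta_p)-\widehat G(\eta_p;\theta)=\bigl(G_N^{\dagger}(\eta_p)-\widehat G(\eta_p;\theta)\bigr)+\bigl(\widehat G_M^{\dagger}(\eta_p)-G_N^{\dagger}(\eta_p)\bigr).
\]
The first bracket is $\mathcal H_N$-measurable. I expand the squared modulus and take $\mathbb E^{\dagger}$. The cross term is $2\operatorname{Re}$ of the conjugate of the first bracket times $\mathbb E^{\dagger}[\widehat G_M^{\dagger}(\eta_p)-G_N^{\dagger}(\eta_p)]$, and this is $0$ by Step 1.

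I then note that $\Delta^{\dagger}_{p,\operatorname{Re}}(\theta)^2+\Delta^{\dagger}_{p,\operatorname{Im}}(\theta)^2$ is exactly the squared modulus above. Averaging over $p$ and inserting the Step 1 variance gives the equality for $\mathrm{MSE}^{\dagger}_{M,P}$ with the term $V^{\dagger}_{P,N}/M$.

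\textbf{Step 3: the MAE bound.} By the triangle inequality,
\[
|\Delta^{\dagger}_{p,\operatorname{Re}}(\theta)|+|\Delta^{\dagger}_{p,\operatorname{Im}}(\theta)|\le|\Delta^{\dagger,*}_{p,\operatorname{Re}}(\theta)|+|\Delta^{\dagger,*}_{p,\operatorname{Im}}(\theta)|+\sqrt2\,\bigl|\widehat G_M^{\dagger}(\eta_p)-G_N^{\dagger}(\eta_p)\bigr|,
\]
where the last term uses $|a|+|b|\le\sqrt2\,(a^2+b^2)^{1/2}$. Taking $\mathbb E^{\dagger}$ and applying the conditional Cauchy--Schwarz (Jensen) inequality $\mathbb E^{\dagger}|Y|\le(\mathbb E^{\dagger}|Y|^2)^{1/2}$, together with Step 1, bounds the last term by $\sqrt{2/M}\,\sqrt{1-|G_N^{\dagger}(\eta_p)|^2}$. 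Averaging over $p$ gives the second bound in \eqref{eq:boot_mse_mae}.

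\textbf{Step 4: the loss bound.} Combining Steps 2 and 3 with weight $\lambda\ge0$, via \eqref{eq:Loss_boot} and \eqref{eq:Loss_boot_exact}, gives \eqref{eq:boot_loss_bound}.

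I expect no step to be a real obstacle. The only care needed is measure-theoretic: everything must hold almost surely in $\mathcal H_N$, using regular conditional distributions. One must also justify treating $G_N^{\dagger}$ and the exact residuals as constants under $\mathbb E^{\dagger}$, which holds because they are $\mathcal H_N$-measurable and $\theta$ is fixed rather than data-dependent.
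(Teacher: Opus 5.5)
Your proposal is correct and matches the paper's proof. The paper likewise obtains the conditional mean and variance from the conditional i.i.d.\ assumption, using $\mathbb E^{\dagger}[e^{i\eta X_m^{\dagger}}]=G_N^{\dagger}(\eta)$ and $\mathbb E^{\dagger}|e^{i\eta X_m^{\dagger}}|^2=1$, and then repeats the proof of Lemma~\ref{lem:ecf_error_section3} with $G$ replaced by $G_N^{\dagger}$ and $\widehat G_M$ replaced by $\widehat G_M^{\dagger}$, which is what your Steps~2--4 write out in full.
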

For a proof of Lemma~\ref{lem:bootstrap_ecf}, see Appendix~\ref{app:bootstrap_ecf}.

{\dpurple{Lemma~\ref{lem:bootstrap_ecf} quantifies the perturbation of the complete MSE-plus-MAE loss relative to the pseudo-law. To compare this full loss with its true-law counterpart, the definitions in \eqref{eq:bootstrap_bias_terms} give, for every $\theta\in\widehat\Theta$,}}
\begin{equation}
\label{eq:true_vs_boot_exact}
{\dpurple{
\mathrm{Loss}_{P}^{*}(\theta)
\le
2\,\mathrm{Loss}_{P,N}^{\dagger,*}(\theta)
+
2B_{P,N}^{(2)}
+
\lambda B_{P,N}^{(1)}.
}}
\end{equation}
{\dpurple{Thus $B_{P,N}^{(1)}$ and $W_{P,N}^{\dagger}/\sqrt{M}$ enter the perturbation analysis for the complete loss through its MAE component. The physical-space squared $L_2$ density error, however, is controlled by the squared Fourier residual alone, and therefore admits the sharper MSE-only bound below.}}

\begin{theorem}[{\dpurple{Conditional expected squared $L_2$ error bound}}]
\label{thm:bootstrap_error}
Fix $\varepsilon>0$, and choose $[-\eta',\eta']$ sufficiently large so that Lemma~\ref{lem:truncation_section3} applies. Let $\widehat\theta^{\dagger}\in\widehat\Theta$ be obtained by minimizing \eqref{eq:Loss_boot}, and suppose that, for some {\dpurple{deterministic}} $\varepsilon_1>0$,
\begin{equation}
\label{eq:boot_empirical_loss}
\mathrm{Loss}_{M,P}^{\dagger}(\widehat\theta^{\dagger})\le \varepsilon_1
\qquad {\dpurple{\text{almost surely}.}}
\end{equation}
Then
\begin{equation}
\label{eq:bootstrap_main_bound}
\begin{aligned}
{\dpurple{
\mathbb E^{\dagger}\Big[
\int_{\Rbb}\big|g(x)-\widehat g(x;\widehat\theta^{\dagger})\big|^2\,dx
\Big]
}}
&{\dpurple{
<
\frac{1}{2\pi}
\Big(
4\varepsilon
+
C_1
\Big(
4\varepsilon_1
+
2B_{P,N}^{(2)}
+
\frac{4V_{P,N}^{\dagger}}{M}
\Big)
+
\frac{C' C_1^2}{P}
\Big).
}}
\end{aligned}
\end{equation}
{\dpurple{Here, $C_1$ is defined in \eqref{eq:eta_partition_section2}, $B_{P,N}^{(2)}$ in \eqref{eq:bootstrap_bias_terms}, $V_{P,N}^{\dagger}$ in \eqref{eq:VW_boot}, and $C'$ in Lemma~\ref{lem:discretization_section3}.}}
\end{theorem}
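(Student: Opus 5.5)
The proof will follow that of Theorem~\ref{thm:error_decomposition_section3}, with every expectation taken conditionally on $\mathcal H_N$. Quantities such as $B_{P,N}^{(2)}$ and $V_{P,N}^{\dagger}$ are $\mathcal H_N$-measurable, so they act as constants under $\mathbb E^{\dagger}$.

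\textbf{Step 1 (deterministic part).} By Plancherel's identity \eqref{eq:GL_plancherel_section2}, I will write $2\pi\int_{\Rbb}|g-\widehat g(\cdot;\widehat\theta^{\dagger})|^2\,dx$ as the sum of two pieces:
\begin{itemize}[noitemsep, topsep=2pt, leftmargin=*]
\item the tail integral over $\Rbb\setminus[-\eta',\eta']$, which is bounded by $4\varepsilon$ pathwise by Lemma~\ref{lem:truncation_section3}, since $\widehat\theta^{\dagger}\in\Theta$ and that lemma is uniform in $\theta$;
\item the integral over $[-\eta',\eta']$.
\end{itemize}
For the second piece, Lemma~\ref{lem:discretization_section3} applies pathwise because its constant is uniform in $\theta$. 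Together with $\delta_p\le C_1/P$, it gives
\[
\int_{-\eta'}^{\eta'}|G-\widehat G(\cdot;\widehat\theta^{\dagger})|^2\,d\eta\le C_1\,\mathrm{MSE}_P^{*}(\widehat\theta^{\dagger})+\frac{C'C_1^2}{P},
\]
exactly as in \eqref{eq:boundF}. It remains to show that
\[
\mathbb E^{\dagger}[\mathrm{MSE}_P^{*}(\widehat\theta^{\dagger})]\le 4\varepsilon_1+2B_{P,N}^{(2)}+\frac{4V_{P,N}^{\dagger}}{M}.
\]

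\textbf{Step 2 (splitting the exact residual).} At each node I will decompose
\[
G(\eta_p)-\widehat G(\eta_p;\widehat\theta^{\dagger})=\bigl(G-G_N^{\dagger}\bigr)(\eta_p)+\bigl(G_N^{\dagger}-\widehat G_M^{\dagger}\bigr)(\eta_p)+\bigl(\widehat G_M^{\dagger}-\widehat G(\cdot;\widehat\theta^{\dagger})\bigr)(\eta_p).
\]
To reproduce the stated constants, I will apply $|a+b|^2\le 2|a|^2+2|b|^2$ twice:
\[
|G-\widehat G|^2\le 2|G-G_N^{\dagger}|^2+2|G_N^{\dagger}-\widehat G|^2,
\qquad
|G_N^{\dagger}-\widehat G|^2\le 2|G_N^{\dagger}-\widehat G_M^{\dagger}|^2+2|\widehat G_M^{\dagger}-\widehat G|^2.
\]
Averaging over $p$ gives
\[
\mathrm{MSE}_P^{*}(\widehat\theta^{\dagger})\le 2B_{P,N}^{(2)}+4\,\mathrm{MSE}_{M,P}^{\dagger}(\widehat\theta^{\dagger})+\frac4P\sum_{p=1}^{P}|\widehat G_M^{\dagger}(\eta_p)-G_N^{\dagger}(\eta_p)|^2 .
\]

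\textbf{Step 3 (conditional expectations).} Since $\lambda\ge0$, assumption \eqref{eq:boot_empirical_loss} gives $\mathrm{MSE}_{M,P}^{\dagger}(\widehat\theta^{\dagger})\le\varepsilon_1$ almost surely, so the middle term contributes at most $4\varepsilon_1$. For the last term, \eqref{eq:boot_mean} in Lemma~\ref{lem:bootstrap_ecf} shows that $\mathbb E^{\dagger}|\widehat G_M^{\dagger}(\eta_p)-G_N^{\dagger}(\eta_p)|^2$ equals the conditional variance $(1-|G_N^{\dagger}(\eta_p)|^2)/M$, so it contributes $4V_{P,N}^{\dagger}/M$. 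Combining this with Step~1 yields \eqref{eq:bootstrap_main_bound}. The inequality is strict because the truncation bound in Lemma~\ref{lem:truncation_section3} is strict.

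\textbf{Main obstacle.} The real subtlety is that $\widehat\theta^{\dagger}$ depends on the pseudo-sample. This rules out applying the fixed-$\theta$ identity \eqref{eq:boot_mse_mae} directly. The argument therefore has to avoid it: I will handle the empirical-loss term by the almost-sure bound and use only the $\theta$-free variance identity for the sampling term. A second point to check is that $B_{P,N}^{(2)}$ and $V_{P,N}^{\dagger}$ are $\mathcal H_N$-measurable, which justifies pulling them out of $\mathbb E^{\dagger}$.
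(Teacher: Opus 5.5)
Your proposal is correct and follows the paper's proof essentially step for step. Plancherel together with the pathwise (uniform-in-$\theta$) truncation and discretization bounds reduces everything to $\mathbb E^{\dagger}[\mathrm{MSE}_P^{*}(\widehat\theta^{\dagger})]$. You then control this term exactly as the paper does: two applications of $|a+b|^2\le 2|a|^2+2|b|^2$ (through $G_N^{\dagger}$ and then $\widehat G_M^{\dagger}$), the almost-sure loss bound, and the $\theta$-free variance identity \eqref{eq:boot_mean}. The only difference is cosmetic: the paper first isolates $\mathbb E^{\dagger}[\mathrm{MSE}_{P,N}^{\dagger,*}(\widehat\theta^{\dagger})]\le 2\varepsilon_1+2V_{P,N}^{\dagger}/M$ and then combines, whereas you fold both splits into a single pathwise inequality before taking $\mathbb E^{\dagger}$.
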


\begin{proof}
By Plancherel's theorem,
\[
2\pi\int_{\Rbb}\big|g(x)-\widehat g(x;\widehat\theta^{\dagger})\big|^2\,dx
=
\int_{\Rbb}\big|G(\eta)-\widehat G(\eta;\widehat\theta^{\dagger})\big|^2\,d\eta.
\]
Applying the same truncation and discretization arguments as in the proof of Theorem~\ref{thm:error_decomposition_section3} gives
\begin{equation}
\label{eq:boot_start}
{\dpurple{
\int_{\Rbb}\big|G(\eta)-\widehat G(\eta;\widehat\theta^{\dagger})\big|^2\,d\eta
<
4\varepsilon
+
C_1\,\mathrm{MSE}_{P}^{*}(\widehat\theta^{\dagger})
+
\frac{C' C_1^2}{P}.
}}
\end{equation}

{\dpurple{We next compare the true-law MSE with its pseudo-law counterpart. Since}}
\[
|G(\eta_p)-\widehat G(\eta_p;\widehat\theta^{\dagger})|^2
\le
2|G(\eta_p)-G_N^{\dagger}(\eta_p)|^2
+
2|G_N^{\dagger}(\eta_p)-\widehat G(\eta_p;\widehat\theta^{\dagger})|^2,
\]
{\dpurple{averaging over $p=1,\ldots,P$ gives}}
\begin{equation}
\label{eq:true_vs_boot_mse}
{\dpurple{
\mathrm{MSE}_{P}^{*}(\widehat\theta^{\dagger})
\le
2B_{P,N}^{(2)}
+
2\,\mathrm{MSE}_{P,N}^{\dagger,*}(\widehat\theta^{\dagger}).
}}
\end{equation}
{\dpurple{For each $p$,}}
\[
{\dpurple{
|G_N^{\dagger}(\eta_p)-\widehat G(\eta_p;\widehat\theta^{\dagger})|^2
\le
2|\widehat G_M^{\dagger}(\eta_p)-\widehat G(\eta_p;\widehat\theta^{\dagger})|^2
+
2|\widehat G_M^{\dagger}(\eta_p)-G_N^{\dagger}(\eta_p)|^2.
}}
\]
{\dpurple{Averaging over $p=1,\ldots,P$, using $\lambda\ge0$ and the almost-sure bound \eqref{eq:boot_empirical_loss}, and then taking $\mathbb E^{\dagger}$ yields}}
\begin{equation}
\label{eq:boot_exact_mse_bound}
{\dpurple{
\mathbb E^{\dagger}\!\left[
\mathrm{MSE}_{P,N}^{\dagger,*}(\widehat\theta^{\dagger})
\right]
\le
2\varepsilon_1
+
\frac{2V_{P,N}^{\dagger}}{M}.
}}
\end{equation}
{\dpurple{Combining \eqref{eq:true_vs_boot_mse} and \eqref{eq:boot_exact_mse_bound} gives}}
\[
{\dpurple{
\mathbb E^{\dagger}\!\left[
\mathrm{MSE}_{P}^{*}(\widehat\theta^{\dagger})
\right]
\le
4\varepsilon_1
+
2B_{P,N}^{(2)}
+
\frac{4V_{P,N}^{\dagger}}{M}.
}}
\]
{\dpurple{Substituting this estimate into \eqref{eq:boot_start} proves \eqref{eq:bootstrap_main_bound}.}}
\end{proof}

\begin{corollary}[{\dpurple{Conditional expected $L_2$ norm}}]
\label{cor:expected_L2_norm_bootstrap}
{\dpurple{Under the conditions of Theorem~\ref{thm:bootstrap_error},}}
\begin{equation}
\label{eq:expected_L2_norm_bootstrap}
\begin{aligned}
{\dpurple{
\mathbb E^{\dagger}\!\left[
\left\|g-\widehat g(\cdot;\widehat\theta^{\dagger})\right\|_{L_2}
\right]
}}
&{\dpurple{
<
\left[
\frac{1}{2\pi}
\Big(
4\varepsilon
+
C_1
\Big(
4\varepsilon_1
+
2B_{P,N}^{(2)}
+
\frac{4V_{P,N}^{\dagger}}{M}
\Big)
+
\frac{C' C_1^2}{P}
\Big)
\right]^{1/2}.
}}
\end{aligned}
\end{equation}
\end{corollary}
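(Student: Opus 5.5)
The bound follows from Theorem~\ref{thm:bootstrap_error} by Jensen's inequality applied to the conditional expectation $\mathbb E^{\dagger}$, exactly as Corollary~\ref{cor:expected_L2_norm_section3} followed from Theorem~\ref{thm:error_decomposition_section3}.

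First, set
\[
Z=\big\|g-\widehat g(\cdot;\widehat\theta^{\dagger})\big\|_{L_2}\ge 0 .
\]
This is a random variable depending on the pseudo-samples and on any training randomness. Since $\widehat g(\cdot;\theta)$ is a probability density for every $\theta\in\Theta$ and $g\in L_2(\Rbb)$ by (A2), $Z$ is finite. Measurability is inherited from that of $\widehat\theta^{\dagger}$, which the statement of Theorem~\ref{thm:bootstrap_error} implicitly assumes by taking $\mathbb E^{\dagger}$.

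Next, apply the conditional Cauchy--Schwarz inequality, equivalently the conditional Jensen inequality for the concave map $t\mapsto\sqrt t$:
\[
\mathbb E^{\dagger}[Z]\le\big(\mathbb E^{\dagger}[Z^2]\big)^{1/2}\qquad\text{a.s.}
\]
This holds because $Z=Z\cdot 1$ and $\mathbb E^{\dagger}[1]=1$.

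Finally, $Z^2=\int_{\Rbb}|g-\widehat g(\cdot;\widehat\theta^{\dagger})|^2\,dx$, so Theorem~\ref{thm:bootstrap_error} bounds $\mathbb E^{\dagger}[Z^2]$ strictly by the bracketed quantity. The square root is strictly increasing on $[0,\infty)$, so the strict inequality carries through and gives the stated bound on $\mathbb E^{\dagger}[Z]$.

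None of these steps is a genuine obstacle. The only point needing care is that all statements are conditional on $\mathcal H_N$: the right-hand side is $\mathcal H_N$-measurable through $B_{P,N}^{(2)}$ and $V_{P,N}^{\dagger}$, so the inequality should be read almost surely in $\mathcal H_N$, just as in the theorem.
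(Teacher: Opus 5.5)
Your proposal is correct and follows the paper's proof: conditional Cauchy--Schwarz gives $\mathbb E^{\dagger}[Z]\le(\mathbb E^{\dagger}[Z^2])^{1/2}$, and Theorem~\ref{thm:bootstrap_error} together with the strict monotonicity of the square root then gives the strict bound. One small point: $Z$ is finite because the mixtures are bounded (since $\sigma_k\ge\sigma_{\min}$ and $b_\ell\ge b_{\min}$), not merely because each one is a probability density, as a general probability density need not lie in $L_2$.
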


\begin{proof}
{\dpurple{By conditional Cauchy--Schwarz,}}
$
{\dpurple{
\mathbb E^{\dagger}\!\big[
\big\|g-\widehat g(\cdot;\widehat\theta^{\dagger})\big\|_{L_2}
\big]
\le
\big(
\mathbb E^{\dagger}\!\big[
\big\|g-\widehat g(\cdot;\widehat\theta^{\dagger})\big\|_{L_2}^2
\big]
\big)^{1/2}.
}}
$
{\dpurple{The result follows from Theorem~\ref{thm:bootstrap_error}.}}
\end{proof}

\begin{remark}[Interpretation of the pseudo-law terms]
\label{rem:bootstrap_terms}
{\dpurple{Lemma~\ref{lem:bootstrap_ecf}, together with \eqref{eq:true_vs_boot_exact}, describes the perturbation of the complete MSE-plus-MAE loss. The quantities $V_{P,N}^{\dagger}/M$ and $W_{P,N}^{\dagger}/\sqrt M$ quantify, respectively, the conditional squared and absolute fluctuations of the empirical pseudo-CF around the pseudo-CF, while $B_{P,N}^{(2)}$ and $B_{P,N}^{(1)}$ quantify the corresponding squared and absolute discrepancies between the pseudo-law and the true law on the Fourier grid. Theorem~\ref{thm:bootstrap_error}, by contrast, controls the conditional expected squared $L_2$ density error through the MSE component alone. Its additional pseudo-sampling terms are therefore $B_{P,N}^{(2)}$ and $V_{P,N}^{\dagger}/M$; the quantities $B_{P,N}^{(1)}$ and $W_{P,N}^{\dagger}/\sqrt M$ remain relevant to the MAE component of the full training loss but are not needed in the sharper density-error bound.}}
\end{remark}

\begin{remark}[Resampling procedures and practical interpretation]
\label{rem:bootstrap_consistency}
Assume that the resampling procedure is consistent for the fixed-horizon law on the chosen Fourier grid, in the sense that
\[
B_{P,N}^{(2)}\to 0,
\qquad
B_{P,N}^{(1)}\to 0,
\qquad N\to\infty,
\]
in probability or in mean. {\dpurple{Then the pseudo-law discrepancy terms vanish asymptotically in both the full-loss comparison and the density-error bound.}} Moreover, since $G_N^{\dagger}$ is a CF,
$|G_N^{\dagger}(\eta_p)|\le 1$ for every $p$, and thus we have
\[
0\le V_{P,N}^{\dagger}\le 1,
\qquad
0\le W_{P,N}^{\dagger}\le 1.
\]
{\dpurple{Hence Theorem~\ref{thm:bootstrap_error} implies the simplified squared $L_2$ bound}}
\[
{\dpurple{
\mathbb E^{\dagger}\!\Big[
\int_{\Rbb}\big|g(x)-\widehat g(x;\widehat\theta^{\dagger})\big|^2\,dx
\Big]
<
\frac{1}{2\pi}
\Big(
4\varepsilon
+
C_1
\Big(
4\varepsilon_1
+
2B_{P,N}^{(2)}
+
\frac{4}{M}
\Big)
+
\frac{C' C_1^2}{P}
\Big).
}}
\]
{\dpurple{Thus the $M$- and $P$-dependent part of the conditional squared $L_2$ density-error bound has the same structure as in the direct i.i.d.\ setting, together with the additional $N$-dependent term $B_{P,N}^{(2)}$. Corollary~\ref{cor:expected_L2_norm_bootstrap} gives the corresponding conditional expected $L_2$-norm bound.}} This provides a theoretical basis for real-data experiments in which a dependent time series is first converted into approximately i.i.d.\ pseudo-samples by a resampling procedure and the Fourier-mixture model is then trained on the corresponding empirical pseudo-CF. For financial time series, block bootstrap methods are a natural example; see Section~\ref{sec:num}.
\end{remark}

\section{Multi-dimensional extension}
\label{sc:multi}
We now extend the construction to dimension $d\ge 1$.
{\dpurple{The model construction below is defined for arbitrary $d$. For the present $L_2$ analysis, however, the Gaussian--Laplace class with $K_{\mathcal L}>0$ is restricted to $d\in\{1,2,3\}$; the Gaussian-only subclass with $K_{\mathcal L}=0$ remains admissible for arbitrary $d\ge1$.}}
We distinguish between the direct i.i.d.\ sampling setting and its pseudo-sample extension.

\subsection{Direct i.i.d.\ sampling}
Throughout this section, bold symbols denote vectors or matrices in $\Rbb^d$ or $\Rbb^{d\times d}$. Let $\mathbf{X}_1,\ldots,\mathbf{X}_M$ be i.i.d.\ observations in $\Rbb^d$ from an unknown density $g$, and let $G$ denote the Fourier transform of $g$. We seek to estimate $g$ from the empirical CF $\widehat G_M$. {\dpurple{All expectations and almost-sure statements in this subsection are with respect to the i.i.d.\ sample and, when applicable, any randomness in the training procedure.}}
Our Fourier convention is
\begin{equation}
\label{eq:FT_pair_multi}
G(\boldsymbol{\eta})
=
\int_{\Rbb^d} e^{i\boldsymbol{\eta}^{\top}\mathbf{x}}\,g(\mathbf{x})\,d\mathbf{x},
\qquad
g(\mathbf{x})
=
\frac{1}{(2\pi)^d}\int_{\Rbb^d} e^{-i\boldsymbol{\eta}^{\top}\mathbf{x}}\,G(\boldsymbol{\eta})\,d\boldsymbol{\eta},
\end{equation}
and the empirical CF is
\begin{equation}
\label{eq:ecf_multi}
\widehat G_M(\boldsymbol{\eta})
=
\frac{1}{M}\sum_{m=1}^{M} e^{i\boldsymbol{\eta}^{\top}\mathbf{X}_m},
\qquad
\boldsymbol{\eta}\in\Rbb^d.
\end{equation}

For $\boldsymbol{\mu},\boldsymbol{\nu}\in\Rbb^d$ and symmetric positive-definite matrices $\mathbf{\Sigma},\mathbf{\Lambda}\in\Rbb^{d\times d}$, define the multivariate Gaussian and elliptical Laplace kernels
\begin{equation}
\label{eq:multi_gauss_kernel}
\varphi_{\mathcal G}^{(d)}(\mathbf{x};\boldsymbol{\mu},\mathbf{\Sigma})
=
\frac{1}{(2\pi)^{d/2}|\mathbf{\Sigma}|^{1/2}}
\exp\!\left(
-\frac{1}{2}(\mathbf{x}-\boldsymbol{\mu})^{\top}\mathbf{\Sigma}^{-1}(\mathbf{x}-\boldsymbol{\mu})
\right),
\end{equation}
\begin{equation}
\label{eq:multi_laplace_kernel}
\varphi_{\mathcal L}^{(d)}(\mathbf{x};\boldsymbol{\nu},\mathbf{\Lambda})
=
\frac{1}{(2\pi)^{d/2}|\mathbf{\Lambda}|^{1/2}}\,
r(\mathbf{x};\boldsymbol{\nu},\mathbf{\Lambda})^{1-d/2}\,
K_{d/2-1}\!\big(r(\mathbf{x};\boldsymbol{\nu},\mathbf{\Lambda})\big),
\end{equation}
where
\[
r(\mathbf{x};\boldsymbol{\nu},\mathbf{\Lambda})
=
\sqrt{(\mathbf{x}-\boldsymbol{\nu})^{\top}\mathbf{\Lambda}^{-1}(\mathbf{x}-\boldsymbol{\nu})},
\]
and $K_{\alpha}(\cdot)$ is the modified Bessel function of the second kind.
{\blue{Here, $\mathbf{\Sigma}$ is the Gaussian covariance matrix, while $\mathbf{\Lambda}$ is the Laplace scale matrix; the covariance matrix of the corresponding Laplace component is $2\mathbf{\Lambda}$.}}

The corresponding Gaussian--Laplace mixture is
\begin{equation}
\label{eq:Sigma_GL_multi}
\widehat g(\mathbf{x};\theta)
=
\sum_{k=1}^{K_{\mathcal G}}
\beta_k^{(\mathcal G)}
\varphi_{\mathcal G}^{(d)}(\mathbf{x};\boldsymbol{\mu}_k,\mathbf{\Sigma}_k)
+
\sum_{\ell=1}^{K_{\mathcal L}}
\beta_\ell^{(\mathcal L)}
\varphi_{\mathcal L}^{(d)}(\mathbf{x};\boldsymbol{\nu}_\ell,\mathbf{\Lambda}_\ell),
\end{equation}
with nonnegative weights summing to one. As in 1D, the weights are obtained by a joint softmax, while {\blue{the Gaussian covariance matrices and Laplace scale matrices}} are parameterized by Cholesky factors with softplus-transformed diagonals.

{\blue{For the analysis, we impose the multidimensional analogue of \eqref{eq:Theta_GL_section2}. Specifically, the component locations and matrices satisfy
\[
\|\boldsymbol{\mu}_k\|,\|\boldsymbol{\nu}_\ell\|\le \overline\mu,
\qquad
\sigma_{\min}^2\mathbf I_d\preceq\mathbf{\Sigma}_k\preceq\sigma_{\max}^2\mathbf I_d,
\qquad
b_{\min}^2\mathbf I_d\preceq\mathbf{\Lambda}_\ell\preceq b_{\max}^2\mathbf I_d,
\]
for all Gaussian and Laplace components. Throughout this section, $\Theta$ denotes the resulting multidimensional effective-parameter class, together with the simplex constraints on the mixture weights.}}

The learned CF again admits a closed form:
\begin{equation}
\label{eq:multi_cf}
\widehat G(\boldsymbol{\eta};\theta)
=
\sum_{k=1}^{K_{\mathcal G}}
\beta_k^{(\mathcal G)}
\exp\!\left(
i\boldsymbol{\eta}^{\top}\boldsymbol{\mu}_k
-
\frac{1}{2}\boldsymbol{\eta}^{\top}\mathbf{\Sigma}_k\boldsymbol{\eta}
\right)
+
\sum_{\ell=1}^{K_{\mathcal L}}
\beta_\ell^{(\mathcal L)}
\frac{\exp\!\left(i\boldsymbol{\eta}^{\top}\boldsymbol{\nu}_\ell\right)}
{1+\boldsymbol{\eta}^{\top}\mathbf{\Lambda}_\ell\boldsymbol{\eta}}.
\end{equation}
Accordingly,
\begin{align}
\label{eq:multi_cf_real}
\operatorname{Re}\widehat G(\boldsymbol{\eta};\theta)
&=
\sum_{k=1}^{K_{\mathcal G}}
\beta_k^{(\mathcal G)}
\exp\!\left(
-\frac{1}{2}\boldsymbol{\eta}^{\top}\mathbf{\Sigma}_k\boldsymbol{\eta}
\right)
\cos\!\left(\boldsymbol{\eta}^{\top}\boldsymbol{\mu}_k\right)
+
\sum_{\ell=1}^{K_{\mathcal L}}
\beta_\ell^{(\mathcal L)}
\frac{\cos\!\left(\boldsymbol{\eta}^{\top}\boldsymbol{\nu}_\ell\right)}
{1+\boldsymbol{\eta}^{\top}\mathbf{\Lambda}_\ell\boldsymbol{\eta}},
\\
\label{eq:multi_cf_imag}
\operatorname{Im}\widehat G(\boldsymbol{\eta};\theta)
&=
\sum_{k=1}^{K_{\mathcal G}}
\beta_k^{(\mathcal G)}
\exp\!\left(
-\frac{1}{2}\boldsymbol{\eta}^{\top}\mathbf{\Sigma}_k\boldsymbol{\eta}
\right)
\sin\!\left(\boldsymbol{\eta}^{\top}\boldsymbol{\mu}_k\right)
+
\sum_{\ell=1}^{K_{\mathcal L}}
\beta_\ell^{(\mathcal L)}
\frac{\sin\!\left(\boldsymbol{\eta}^{\top}\boldsymbol{\nu}_\ell\right)}
{1+\boldsymbol{\eta}^{\top}\mathbf{\Lambda}_\ell\boldsymbol{\eta}}.
\end{align}
The restriction of the Gaussian--Laplace $L_2$ analysis to $d\in\{1,2,3\}$ follows from the square-integrability of the elliptical Laplace kernel; see Appendix~\ref{app:multi_laplace_L2}.

Training proceeds exactly as in Subsection~\ref{subsec:training_loss_section2}, with Fourier domain $[-\eta',\eta']^d$. For training nodes $\{\boldsymbol{\eta}_p\}_{p=1}^{P}\subset[-\eta',\eta']^d$, define the residuals
\[
\Delta_{p,\operatorname{Re}}(\theta)
=
\operatorname{Re}\widehat G_M(\boldsymbol{\eta}_p)
-
\operatorname{Re}\widehat G(\boldsymbol{\eta}_p;\theta),
\qquad
\Delta_{p,\operatorname{Im}}(\theta)
=
\operatorname{Im}\widehat G_M(\boldsymbol{\eta}_p)
-
\operatorname{Im}\widehat G(\boldsymbol{\eta}_p;\theta),
\]
and use the loss
\begin{equation}
\label{eq:multi_loss}
\mathrm{Loss}_{M,P}^{(d)}(\theta)
=
\frac{1}{P}\sum_{p=1}^{P}
\Big(
\Delta_{p,\operatorname{Re}}(\theta)^2
+
\Delta_{p,\operatorname{Im}}(\theta)^2
\Big)
+
\frac{\lambda}{P}\sum_{p=1}^{P}
\Big(
|\Delta_{p,\operatorname{Re}}(\theta)|
+
|\Delta_{p,\operatorname{Im}}(\theta)|
\Big).
\end{equation}
Assume that $P$ is a perfect $d$th power and that $[-\eta',\eta']^d$ is partitioned by a deterministic tensor-product grid with $P^{1/d}$ nodes in each coordinate direction. Let $\delta_{\max}=C_1/P^{1/d}$. We define
\begin{equation}
\label{eq:multi_VW}
V_P^{(d)}
=
\frac{1}{P}\sum_{p=1}^{P}\big(1-|G(\boldsymbol{\eta}_p)|^2\big),
\qquad
W_P^{(d)}
=
\frac{1}{P}\sum_{p=1}^{P}\sqrt{1-|G(\boldsymbol{\eta}_p)|^2}.
\end{equation}
{\dpurple{As in Lemma~\ref{lem:ecf_error_section3}, $V_P^{(d)}/M$ and $W_P^{(d)}/\sqrt{M}$ quantify the perturbations of the MSE and MAE components, respectively, of the complete training loss. The physical-space squared $L_2$ density error is controlled by the squared Fourier residual alone, and therefore only $V_P^{(d)}/M$ appears in the sharper bound below.}}

\begin{theorem}[{\dpurple{Expected squared $L_2$ error bound in dimension $d$}}]
\label{thm:multi_error}
{\dpurple{Assume either that $1\le d\le3$, or that $d\ge1$ and $K_{\mathcal L}=0$.}}
Assume that $g$ is a probability density in $L_\infty(\Rbb^d)$ and that $G$ is Lipschitz continuous on every compact subset of $\Rbb^d$. Fix $\varepsilon>0$, and choose $\eta'>0$ sufficiently large so that the Fourier tails of $G$ and $\widehat G(\cdot;\theta)$ outside $[-\eta',\eta']^d$ are bounded by $\varepsilon$, uniformly in $\theta\in\Theta$. Let $\widehat\theta$ be obtained by minimizing \eqref{eq:multi_loss}, and suppose that, for some {\dpurple{deterministic}} $\varepsilon_1>0$,
\[
\mathrm{Loss}_{M,P}^{(d)}(\widehat\theta)\le \varepsilon_1
\qquad {\dpurple{\text{almost surely}.}}
\]
Then
\begin{equation*}
{\dpurple{
\mathbb{E}\Big[
\int_{\Rbb^d}\big|g(\mathbf{x})-\widehat g(\mathbf{x};\widehat\theta)\big|^2\,d\mathbf{x}
\Big]
<
\frac{1}{(2\pi)^d}
\Big(
4\varepsilon
+
C_1^d
\big(
2\varepsilon_1
+
\frac{2V_P^{(d)}}{M}
\big)
+
\frac{C_d' C_1^{d+1}}{P^{1/d}}
\Big),
}}
\end{equation*}
where $C_d'=C_d'(\eta')$ is independent of $P$. Since $|G(\boldsymbol{\eta}_p)|\le 1$ for every $p$, {\dpurple{we have $0\le V_P^{(d)}\le1$, and hence}}
\begin{equation*}
{\dpurple{
\mathbb{E}\Big[
\int_{\Rbb^d}\big|g(\mathbf{x})-\widehat g(\mathbf{x};\widehat\theta)\big|^2\,d\mathbf{x}
\Big]
<
\frac{1}{(2\pi)^d}
\Big(
4\varepsilon
+
C_1^d
\big(
2\varepsilon_1
+
\frac{2}{M}
\big)
+
\frac{C_d' C_1^{d+1}}{P^{1/d}}
\Big).
}}
\end{equation*}
\end{theorem}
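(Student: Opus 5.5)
The proof mirrors that of Theorem~\ref{thm:error_decomposition_section3}, with three ingredients adapted to dimension $d$: Plancherel in $\Rbb^d$, a $d$-dimensional discretization estimate, and cell volumes replacing $\delta_p$.

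\textbf{Plancherel and truncation.} First check that every $\widehat g(\cdot;\theta)$, $\theta\in\Theta$, lies in $L_1(\Rbb^d)\cap L_2(\Rbb^d)$. Gaussian components are trivially square integrable. For the elliptical Laplace kernel, square integrability holds precisely when $d\le3$, because $|\widehat G|^2\sim(\boldsymbol\eta^\top\mathbf\Lambda\boldsymbol\eta)^{-2}$ is integrable at infinity iff $d<4$ (Appendix~\ref{app:multi_laplace_L2}). This is where the hypothesis on $d$ or $K_{\mathcal L}=0$ enters. Since $g\in L_1\cap L_\infty$, also $g\in L_2$. Plancherel with the convention \eqref{eq:FT_pair_multi} then gives
\[
(2\pi)^d\|g-\widehat g(\cdot;\widehat\theta)\|_{L_2}^2=\int_{\Rbb^d}|G-\widehat G(\cdot;\widehat\theta)|^2\,d\boldsymbol\eta .
\]
Split this integral over $[-\eta',\eta']^d$ and its complement. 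On the complement, use $|a-b|^2\le2|a|^2+2|b|^2$ and the assumed uniform tail bounds to obtain $<4\varepsilon$, exactly as in Lemma~\ref{lem:truncation_section3}.

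\textbf{Discretization.} On the box, let the tensor grid cells $Q_p$ have side lengths at most $\delta_{\max}=C_1/P^{1/d}$, so that $|Q_p|\le C_1^d/P$ and $\operatorname{diam}Q_p\le\sqrt d\,C_1/P^{1/d}$. We need a $d$-dimensional analogue of Lemma~\ref{lem:discretization_section3}: the function $F_\theta=|G-\widehat G(\cdot;\theta)|^2$ is Lipschitz on $[-\eta',\eta']^d$ with a constant $L_F$ uniform in $\theta$. This follows from three facts: $|G|,|\widehat G|\le1$; $G$ is Lipschitz on the compact box by hypothesis; and $\nabla_{\boldsymbol\eta}\widehat G$ is bounded by constants depending only on $\overline\mu,\sigma_{\max},b_{\max},\eta'$. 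Explicitly, the gradient of a Gaussian term is bounded by $\overline\mu+\sigma_{\max}^2\sqrt d\,\eta'$, and that of a Laplace term by $\overline\mu+2b_{\max}^2\|\boldsymbol\eta\|$. Then
\[
\Big|\sum_p|Q_p|F_\theta(\boldsymbol\eta_p)-\int_{[-\eta',\eta']^d}F_\theta\Big|\le L_F\,\operatorname{diam}(Q_p)\,(2\eta')^d\le \frac{C_d'C_1^{d+1}}{P^{1/d}},
\]
after absorbing $\sqrt d$, $L_F$ and the box volume into $C_d'(\eta')$. The factor $C_1^{d+1}$ arises from $\operatorname{diam}Q_p$ together with the bookkeeping of the box volume written as $P\cdot C_1^d/P$ if one keeps the constants in that form. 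Bounding $|Q_p|\le C_1^d/P$ in the Riemann sum then gives
\[
\int_{[-\eta',\eta']^d}F_{\widehat\theta}\le C_1^d\,\mathrm{MSE}^{*}_P(\widehat\theta)+\frac{C_d'C_1^{d+1}}{P^{1/d}}.
\]

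\textbf{Sampling step.} Finally, bound $\mathbb E[\mathrm{MSE}_P^*(\widehat\theta)]$ exactly as in \eqref{eq:exact_mse_bound_section3}. Pointwise,
\[
|G-\widehat G(\cdot;\widehat\theta)|^2\le 2|\widehat G_M-\widehat G(\cdot;\widehat\theta)|^2+2|\widehat G_M-G|^2 .
\]
Use $\mathrm{MSE}\le\mathrm{Loss}^{(d)}_{M,P}\le\varepsilon_1$ almost surely (valid since $\lambda\ge0$). The identity $\mathbb E|\widehat G_M(\boldsymbol\eta_p)-G(\boldsymbol\eta_p)|^2=(1-|G(\boldsymbol\eta_p)|^2)/M$ holds verbatim in $\Rbb^d$ for i.i.d.\ samples, because it is a pointwise variance computation. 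Together these give $2\varepsilon_1+2V_P^{(d)}/M$. Assembling the three pieces and dividing by $(2\pi)^d$ yields the first bound, and $V_P^{(d)}\le1$ yields the second.

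\textbf{Main obstacle.} The delicate step is the uniform-in-$\theta$ Lipschitz bound for $\widehat G$ and the resulting $d$-dimensional discretization constant. Everything else is a direct transcription of the one-dimensional argument.
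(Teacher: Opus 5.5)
Your proposal is correct and follows the route the paper indicates for this omitted proof: $d$-dimensional Plancherel with truncation, a tensor-product version of the tagged Riemann-sum estimate (cells of volume at most $C_1^d/P$ and diameter at most $\sqrt d\,C_1/P^{1/d}$), and the same MSE-only sampling bound $2\varepsilon_1+2V_P^{(d)}/M$. Your $C_1^{d+1}$ bookkeeping also matches the paper's 1D convention: the bound $\sum_p|Q_p|\operatorname{diam}Q_p\le P\cdot\frac{C_1^d}{P}\cdot\frac{\sqrt d\,C_1}{P^{1/d}}$ gives the stated form directly with $C_d'=\sqrt d\,L_F$.
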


\begin{corollary}[{\dpurple{Expected $L_2$ norm in dimension $d$}}]
\label{cor:multi_expected_L2_norm}
{\dpurple{Under the conditions of Theorem~\ref{thm:multi_error},}}
\begin{equation*}
{\dpurple{
\mathbb E\!\left[
\left\|g-\widehat g(\cdot;\widehat\theta)\right\|_{L_2(\Rbb^d)}
\right]
<
\left[
\frac{1}{(2\pi)^d}
\Big(
4\varepsilon
+
C_1^d
\big(
2\varepsilon_1
+
\frac{2V_P^{(d)}}{M}
\big)
+
\frac{C_d' C_1^{d+1}}{P^{1/d}}
\Big)
\right]^{1/2}.
}}
\end{equation*}
\end{corollary}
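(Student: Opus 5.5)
The statement is Corollary~\ref{cor:multi_expected_L2_norm}, the $d$-dimensional analogue of Corollary~\ref{cor:expected_L2_norm_section3}. The plan is to reuse that argument verbatim. Write $Z=\|g-\widehat g(\cdot;\widehat\theta)\|_{L_2(\Rbb^d)}$, a nonnegative random variable through the sample and any training randomness. By Cauchy--Schwarz (equivalently, Jensen's inequality for the concave map $t\mapsto\sqrt t$),
\[
\mathbb E[Z]\le\big(\mathbb E[Z^2]\big)^{1/2}.
\]
Here $Z^2=\int_{\Rbb^d}|g(\mathbf x)-\widehat g(\mathbf x;\widehat\theta)|^2\,d\mathbf x$ is exactly the quantity controlled by Theorem~\ref{thm:multi_error}.

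The key steps are as follows.

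\begin{enumerate}
\item Check that $Z$ is well defined and square-integrable in expectation. Under the hypotheses, $g\in L_1\cap L_\infty\subset L_2$. Each $\widehat g(\cdot;\theta)$ lies in $L_2(\Rbb^d)$: Gaussian components always do, and Laplace components do for $d\le3$ (Appendix~\ref{app:multi_laplace_L2}). This is precisely why the dimensional restriction appears in the hypotheses.
\item Invoke the first (un-simplified) bound of Theorem~\ref{thm:multi_error}, which keeps $V_P^{(d)}$, to get
\[
\mathbb E[Z^2]<\frac{1}{(2\pi)^d}\Big(4\varepsilon+C_1^d\big(2\varepsilon_1+\tfrac{2V_P^{(d)}}{M}\big)+\tfrac{C_d'C_1^{d+1}}{P^{1/d}}\Big).
\]
\item Use monotonicity of the square root. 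Since the inequality in step 2 is strict and $\sqrt{\cdot}$ is strictly increasing, the strict inequality survives, which gives the stated bound.
\end{enumerate}

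The only point needing care is measurability of $Z$ as a function of the random sample, so that the expectations make sense. This is implicit in Theorem~\ref{thm:multi_error} itself, so I expect no genuine obstacle.
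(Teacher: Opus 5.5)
Your proposal is correct and is essentially the paper's argument: the paper omits this proof as following Corollary~\ref{cor:expected_L2_norm_section3}, i.e.\ Cauchy--Schwarz $\mathbb E[Z]\le(\mathbb E[Z^2])^{1/2}$ for $Z=\|g-\widehat g(\cdot;\widehat\theta)\|_{L_2(\Rbb^d)}$, combined with the unsimplified bound of Theorem~\ref{thm:multi_error}. Your remarks on square-integrability for $d\le3$ and on the strict inequality surviving the strictly increasing square root are correct but not needed beyond what the theorem already provides.
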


{\dpurple{For brevity, proofs of Theorem~\ref{thm:multi_error} and Corollary~\ref{cor:multi_expected_L2_norm} are omitted, since the arguments follow the same steps as in Section~\ref{sec:error}: Plancherel's theorem is replaced by its $d$-dimensional counterpart, and the 1D quadrature estimate is replaced by the tensor-product version on $[-\eta',\eta']^d$. The perturbation of the complete MSE-plus-MAE loss retains the terms $V_P^{(d)}/M$ and $W_P^{(d)}/\sqrt M$, while the sharper expected squared $L_2$ density-error bound uses the MSE component alone and contains only $V_P^{(d)}/M$. The deterministic discretization term deteriorates to order $P^{-1/d}$.}}

\subsection{Resampling-based pseudo-sampling}
{\dpurple{The pseudo-sample framework of Section~\ref{sec:bootstrap} also extends directly to the Gaussian--Laplace class for $1\le d\le3$, and to the Gaussian-only subclass for arbitrary $d\ge1$.}}
Let $\mathcal H_N$ denote the observed dependent history, and suppose that, conditional on $\mathcal H_N$, the pseudo-samples
\[
\mathbf{X}_1^{\dagger},\ldots,\mathbf{X}_M^{\dagger}\in\Rbb^d
\qquad\text{are i.i.d.\ from a common pseudo-law}\qquad
\mathcal L_{\tau,N}^{\dagger,(d)}.
\]
{\dpurple{Throughout this subsection, conditional expectations and almost-sure statements are with respect to the pseudo-sample generation and, when applicable, any randomness in the training procedure, conditional on $\mathcal H_N$.}}
Let $G_N^{\dagger,(d)}$ denote the CF of this pseudo-law, and define the empirical pseudo-CF by
\begin{equation}
\label{eq:ecf_multi_boot}
\widehat G_M^{\dagger,(d)}(\boldsymbol{\eta})
=
\frac{1}{M}\sum_{m=1}^{M} e^{i\boldsymbol{\eta}^{\top}\mathbf{X}_m^{\dagger}}.
\end{equation}
Define the multivariate pseudo-law discrepancy terms
\begin{equation}
\label{eq:multi_boot_bias}
\begin{aligned}
B_{P,N}^{(1,d)}
&=
\frac{1}{P}\sum_{p=1}^{P}
\Big(
\big|\operatorname{Re}(G(\boldsymbol{\eta}_p)-G_N^{\dagger,(d)}(\boldsymbol{\eta}_p))\big|
+
\big|\operatorname{Im}(G(\boldsymbol{\eta}_p)-G_N^{\dagger,(d)}(\boldsymbol{\eta}_p))\big|
\Big),
\\
B_{P,N}^{(2,d)}
&=
\frac{1}{P}\sum_{p=1}^{P}
\big|G(\boldsymbol{\eta}_p)-G_N^{\dagger,(d)}(\boldsymbol{\eta}_p)\big|^2,
\end{aligned}
\end{equation}
as well as
\begin{equation}
\label{eq:multi_boot_VW}
V_{P,N}^{\dagger,(d)}
=
\frac{1}{P}\sum_{p=1}^{P}\big(1-|G_N^{\dagger,(d)}(\boldsymbol{\eta}_p)|^2\big),
\qquad
W_{P,N}^{\dagger,(d)}
=
\frac{1}{P}\sum_{p=1}^{P}\sqrt{1-|G_N^{\dagger,(d)}(\boldsymbol{\eta}_p)|^2}.
\end{equation}
{\dpurple{As in Section~\ref{sec:bootstrap}, $B_{P,N}^{(1,d)}$ and $W_{P,N}^{\dagger,(d)}/\sqrt M$ enter the perturbation analysis for the MAE component of the complete loss, whereas the conditional expected squared $L_2$ density-error bound uses only $B_{P,N}^{(2,d)}$ and $V_{P,N}^{\dagger,(d)}/M$.}}
If $\widehat\theta^{\dagger}$ is obtained by minimizing the multivariate pseudo-sample loss and, for some {\dpurple{deterministic}} $\varepsilon_1>0$, satisfies
\[
\mathrm{Loss}_{M,P}^{\dagger,(d)}(\widehat\theta^{\dagger})\le \varepsilon_1
\qquad {\dpurple{\text{almost surely},}}
\]
then the same argument as in Section~\ref{sec:bootstrap} yields the conditional estimate
\begin{equation}
\label{eq:multi_bootstrap_bound}
\begin{aligned}
{\dpurple{
\mathbb E^{\dagger}\!\Big[
\int_{\Rbb^d}\big|g(\mathbf{x})-\widehat g(\mathbf{x};\widehat\theta^{\dagger})\big|^2\,d\mathbf{x}
\Big]
<
\frac{1}{(2\pi)^d}
\Big(
4\varepsilon
+
C_1^d
\big(
4\varepsilon_1
+
2B_{P,N}^{(2,d)}
+
\frac{4V_{P,N}^{\dagger,(d)}}{M}
\big)
+
\frac{C_d' C_1^{d+1}}{P^{1/d}}
\Big).
}}
\end{aligned}
\end{equation}
{\dpurple{Consequently, by conditional Cauchy--Schwarz,}}
\begin{equation*}
{\dpurple{
\mathbb E^{\dagger}\!\left[
\left\|g-\widehat g(\cdot;\widehat\theta^{\dagger})\right\|_{L_2(\Rbb^d)}
\right]
<
\left[
\frac{1}{(2\pi)^d}
\Big(
4\varepsilon
+
C_1^d
\big(
4\varepsilon_1
+
2B_{P,N}^{(2,d)}
+
\frac{4V_{P,N}^{\dagger,(d)}}{M}
\big)
+
\frac{C_d' C_1^{d+1}}{P^{1/d}}
\Big)
\right]^{1/2}.
}}
\end{equation*}
{\dpurple{Thus the complete pseudo-sample loss retains the squared and absolute pseudo-CF fluctuation terms and both pseudo-law discrepancy terms, while the conditional expected squared $L_2$ density error retains truncation, empirical training error, discretization, the empirical pseudo-CF term $V_{P,N}^{\dagger,(d)}/M$, and the squared pseudo-law discrepancy $B_{P,N}^{(2,d)}$.}}

\section{Complexity analysis}
\label{sc:complex}
We now discuss the computational complexity in both the direct i.i.d.\ sampling setting and the resampling-based pseudo-sampling setting. Recall that $M$ denotes the number of samples used to construct the empirical CF, $P$ the number of training frequencies, and $(K_{\mathcal G},K_{\mathcal L})$ the numbers of Gaussian and Laplace components. We write $K=K_{\mathcal G}+K_{\mathcal L}$
and let $B$ and $E$ respectively denote the minibatch size and the total number of training epochs in Algorithm~\ref{alg:data_driven_fournet}.
%
\subsection{Direct i.i.d.\ sampling}
\paragraph{One-dimensional case.}
The computational cost of Algorithm~\ref{alg:data_driven_fournet} is determined by:
(i) preprocessing of the empirical CF,
(ii) repeated forward evaluations of the learned CF on minibatches of training nodes,
and (iii) the corresponding backward passes and parameter updates.

The empirical CF values are computed via
$\widehat G_M(\eta_p)=\frac{1}{M}\sum_{m=1}^{M} e^{i\eta_p X_m}$, $p=1,\ldots,P$,
which costs $\mathcal O(MP)$. This is a one-off cost for a fixed dataset and node set. Optional affine preprocessing requires only $\mathcal O(M)$ operations.

At one training node $\eta_p$, each Gaussian component contributes
$\beta_k^{(\mathcal G)}e^{i\eta_p\mu_k-\sigma_k^2\eta_p^2/2}$,
and each Laplace component contributes
$\beta_\ell^{(\mathcal L)}
\frac{e^{i\eta_p\nu_\ell}}{1+b_\ell^2\eta_p^2}$.
Hence, in the forward pass,  the  cost at one node is $\mathcal O(K)$, and over a minibatch of size $B$ it is $\mathcal O(BK)$. The softmax and softplus evaluations contribute only $\mathcal O(K)$ per forward pass.
The loss function evaluation is $\mathcal O(B)$, and the backward pass plus parameter update has the same asymptotic order as the forward pass. Therefore one optimization step costs $\mathcal O(BK)$, one epoch costs $\mathcal O(PK)$, and $E$ epochs cost $\mathcal O(EPK)$.

Combining preprocessing and training gives
\begin{equation}
\label{eq:complexity_total_1d}
\mathcal O(MP)+\mathcal O(EPK).
\end{equation}
Thus, once $\widehat G_M$ is formed, the iterative part of the algorithm no longer \mbox{depends on $M$.}

\paragraph{$d$-dimensional case.}
If the model is extended to $\Rbb^d$ with full covariance matrices, then constructing the empirical CF at $P$ nodes costs $\mathcal O(MPd)$, since each factor $e^{i\boldsymbol{\eta}_p^{\top}\mathbf X_m}$ requires one inner product in $\Rbb^d$. Each Gaussian or Laplace component at one training node requires evaluating a linear form $\boldsymbol{\eta}^{\top}\boldsymbol{\mu}$ and a quadratic form $\boldsymbol{\eta}^{\top}\mathbf{\Sigma}\boldsymbol{\eta}$, which cost $\mathcal O(d)$ and $\mathcal O(d^2)$, respectively. Hence,  the forward cost over a minibatch of size $B$ is $\mathcal O(BKd^2)$, and the total training cost over $E$ epochs is $\mathcal O(EPKd^2)$. Therefore, the total complexity in the full-covariance setting is
\[
\mathcal O(MPd)+\mathcal O(EPKd^2).
\]
This assumes that the covariance matrices are handled through their Cholesky factors, so that $\boldsymbol{\eta}^{\top}\mathbf{\Sigma}\boldsymbol{\eta}$ is evaluated directly without explicitly forming $\mathbf{\Sigma}=\mathbf L\mathbf L^{\top}$.

\subsection{Resampling-based pseudo-sampling}

In the pseudo-sample framework of Section~\ref{sec:bootstrap}, the empirical CF is built from pseudo-samples rather than directly from the observed dependent sample. The total complexity therefore acquires an additional preprocessing term corresponding to pseudo-sample generation.

\paragraph{One-dimensional case.}
Let $\mathcal C_{\mathrm{res}}(N,M,\tau)$ denote the cost of generating $M$ pseudo-samples over horizon $\tau$ from the observed history $\mathcal H_N$. This cost depends on the chosen resampling procedure; for block-resampling methods for dependent data, see, for example, \cite{kunsch1989jackknife,politis1994stationary,lahiri2003resampling}.
Since the exact cost depends on the resampling scheme and on how the horizon-$\tau$ pseudo-samples are represented, we keep $\mathcal C_{\mathrm{res}}(N,M,\tau)$ abstract in the subsequent complexity analysis.

Once the pseudo-samples have been generated, the empirical pseudo-CF $\widehat G_M^{\dagger}$ is formed exactly as in the direct-sampling setting, so the Fourier preprocessing cost remains $\mathcal O(MP)$ and the iterative training cost remains $\mathcal O(EPK)$. Hence the total one-dimensional complexity under resampling-based pseudo-sampling is
\begin{equation}
\label{eq:complexity_total_1d_boot}
\mathcal O\big(\mathcal C_{\mathrm{res}}(N,M,\tau)\big)
+
\mathcal O(MP)
+
\mathcal O(EPK).
\end{equation}

\paragraph{$d$-dimensional case.}
Let $\mathcal C_{\mathrm{res}}^{(d)}(N,M,\tau)$ denote the cost of generating pseudo-samples $\mathbf X_1^{\dagger},\ldots,\mathbf X_M^{\dagger}\in\Rbb^d$ by the resampling procedure. The empirical pseudo-CF construction then costs $\mathcal O(MPd)$, while training the full-covariance Gaussian--Laplace mixture costs $\mathcal O(EPKd^2)$. Therefore the total $d$-dimensional complexity under resampling-based pseudo-sampling is
\begin{equation}
\label{eq:complexity_total_d_boot}
\mathcal O\big(\mathcal C_{\mathrm{res}}^{(d)}(N,M,\tau)\big)
+
\mathcal O(MPd)
+
\mathcal O(EPKd^2).
\end{equation}

\subsection{Comparison with Expectation--Maximization}
Consider a 1D Gaussian-mixture model fitted by the Expectation--Maximization (EM) algorithm; see \cite{mclachlan2019finite}. If the model has $K$ components, then one expectation-step and one maximization-step over $M$ observations both scale linearly in $MK$, so the cost per iteration is $\mathcal O(MK)$. If $T_{\mathrm{EM}}$ iterations are required, the total cost of one fit is $\mathcal O(T_{\mathrm{EM}}MK)$. With $R$ random restarts and a search over several candidate values of $K$, the total cost grows as
\begin{equation}
\label{eq:complexity_em_search}
\mathcal O\big(R\,T_{\mathrm{EM}}\,M\,\sum_{K'\in\mathcal K}K'\big),
\end{equation}
where $\mathcal K$ is the set of tested model sizes.

\paragraph{Direct i.i.d.\ sampling setting.} In one-dimension, the essential difference between \eqref{eq:complexity_em_search} and \eqref{eq:complexity_total_1d} is that EM remains linear in $M$ at every iteration, whereas the Fourier-mixture method depends on $M$ only through the one-off construction of the empirical CF. After preprocessing, the iterative part of the algorithm scales only \mbox{with $P$ and $K$.}

For full-covariance Gaussian components in $d$ dimensions, one EM iteration scales as $\mathcal O(MKd^2)$, with an additional $\mathcal O(Kd^3)$ cost for covariance factorizations or inversions. Thus one full fit scales as
$\mathcal O\big(T_{\mathrm{EM}}(MKd^2+Kd^3)\big)$,
and with restarts and model-order search the total cost becomes
\[
\mathcal O\big(R\,T_{\mathrm{EM}}\sum_{K'\in\mathcal K}(MK'd^2+K'd^3)\big).
\]

\paragraph{Resampling-based pseudo-sampling setting.} A fair comparison should also include the cost of generating pseudo-samples. Thus, in one-dimension, EM trained on pseudo-samples scales as
\[
\mathcal O\big(\mathcal C_{\mathrm{res}}(N,M,\tau)\big)
+
\mathcal O\big(R\,T_{\mathrm{EM}}\,M\,\sum_{K'\in\mathcal K}K'\big),
\]
while in $d$ dimensions the corresponding full-covariance complexity becomes
\[
\mathcal O\big(\mathcal C_{\mathrm{res}}^{(d)}(N,M,\tau)\big)
+
\mathcal O\big(R\,T_{\mathrm{EM}}\sum_{K'\in\mathcal K}(MK'd^2+K'd^3)\big).
\]
By comparison, the Fourier-mixture method scales as \eqref{eq:complexity_total_1d_boot} in one-dimension and \eqref{eq:complexity_total_d_boot} in $d$-dimensions. Thus the qualitative distinction persists in both settings: once the empirical (pseudo-)CF has been formed, the iterative stage of the Fourier-mixture method is decoupled from the sample size $M$, whereas EM continues to scale linearly in $M$ at every iteration.

\section{Numerical experiments}
\label{sec:num}
This section is organized according to the two data settings developed in the theory. Subsection~\ref{subsec:num_direct} reports experiments under direct i.i.d.\ sampling from known target laws, including a controlled comparison of exact-CF and empirical-CF training, Gaussian-mixture sanity checks, non-Gaussian benchmarks, a two-dimensional example, and an error-decay study linked to the error analysis of Section~\ref{sec:error}. Subsection~\ref{subsec:num_pseudo} reports a resampling-based pseudo-sampling experiment built from Australian equity data, aligned with Section~\ref{sec:bootstrap}.


Unless otherwise stated, the one-dimensional experiments requiring model selection or held-out evaluation use a total sample size $M_{\mathrm{tot}}=10^6$, split as
$M_{\mathrm{tr}}:M_{\mathrm{val}}:M_{\mathrm{te}}=40\%:30\%:30\%$,
with $P=10^3$ Fourier training nodes on $[-\eta',\eta']=[-50,50]$ and minibatch size $1024$. The training set $\mathcal D_{\mathrm{tr}}$ is used to construct the empirical CF (or empirical pseudo-CF) and train the NN, $\mathcal D_{\mathrm{val}}$ for model selection and early stopping, and $\mathcal D_{\mathrm{te}}$ for out-of-sample evaluation. When this split is used, $M=M_{\mathrm{tr}}$; otherwise, $M$ denotes the full sample used to construct the empirical CF and is specified locally.

When the exact CF $G$ is available in closed form, we report
\[
L_2^{\operatorname{Re}}(G,\widehat G)
=
\int_{[-\eta',\eta']}
\big|\operatorname{Re}G(\eta)-\operatorname{Re}\widehat G(\eta)\big|^2\,d\eta,
\]
together with
\[
\operatorname{MPE}^{\operatorname{Re}}(G,\widehat G)
=
\max_{1\le q\le Q}
\big|\operatorname{Re}G(\eta_q)-\operatorname{Re}\widehat G(\eta_q)\big|,
\]
where $\{\eta_q\}_{q=1}^{Q}$ is a dense evaluation grid on $[-\eta',\eta']$, independent of the training grid; the imaginary-part quantities are defined analogously. When the exact density $g$ is available, we also report
\[
L_2(g,\widehat g)
=
\left(
\int_{[-A,A]} |g(x)-\widehat g(x)|^2\,dx
\right)^{1/2},
\]
for $A>0$ sufficiently large. When a test dataset is available, we report the per-sample negative log-likelihood
\[
\operatorname{NLL}(\widehat g;\mathcal D_{\mathrm{te}})
=
-\frac{1}{M_{\mathrm{te}}}\sum_{x\in\mathcal D_{\mathrm{te}}}\log \widehat g(x),
\]
so that smaller values indicate a better out-of-sample fit.

\subsection{Direct i.i.d.\ sampling}
\label{subsec:num_direct}

\subsubsection{Controlled comparison of exact-CF and empirical-CF training}
\label{subsec:num_exact_empirical}
We first quantify the additional estimation error caused by replacing the exact target CF with its empirical counterpart during Fourier-mixture NN training. The exact-CF reference model is trained against $G(\eta_p)$, whereas the empirical-CF model is trained against $\widehat G_M(\eta_p)$. The model class, Fourier nodes, loss function, optimizer, training budget, and initializations are held fixed, so the comparison isolates the effect of empirical-CF sampling. 

For the primary comparison, we consider the well-separated 3-GMM target
\[
g(x)=\sum_{j=1}^{3}\pi_j\,\varphi_{\mathcal G}(x;\mu_j,\sigma_j),
\qquad
G(\eta)=\sum_{j=1}^{3}\pi_j
\exp\!\left(i\mu_j\eta-\sigma_j^2\eta^2/2\right),
\]
with $(\pi_1,\pi_2,\pi_3)=(0.5,0.3,0.2)$, $(\mu_1,\mu_2,\mu_3)=(-4,0,4)$, and $(\sigma_1,\sigma_2,\sigma_3)=(1,1,1)$. Training samples are drawn i.i.d.\ from this target density, and all $M$ observations in each repetition are used to construct the empirical CF. We use the correctly specified Gaussian model $(K_{\mathcal G},K_{\mathcal L})=(3,0)$, set $[-\eta',\eta']=[-50,50]$, and take the $P=4000$ Fourier nodes to be the midpoints of a uniform partition of this interval. We consider $M\in\{10^3,4\times10^3,1.6\times10^4,6.4\times10^4,2.56\times10^5,10^6\}$. At each $M$, we use 30 independently generated datasets and the same 30 model initializations for the exact-CF and empirical-CF runs.

\begin{figure}[htbp]
\centering

\begin{minipage}[t]{0.55\textwidth}
\vspace{0pt}
\centering
\includegraphics[width=\linewidth]
{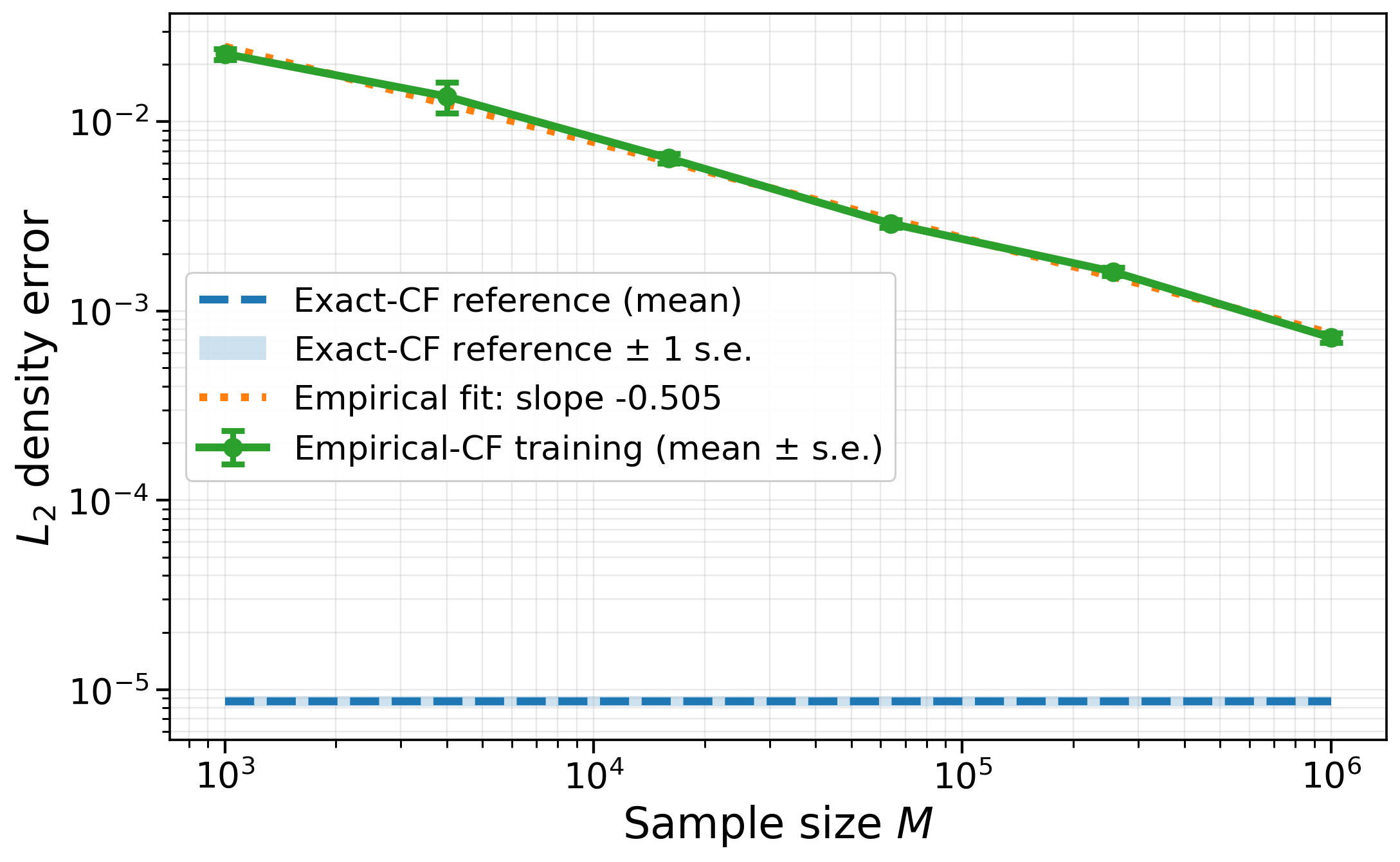}
\end{minipage}
\hfill
\begin{minipage}[t]{0.4\textwidth}
\vspace{0pt}
\raggedright
\begingroup
\setlength{\abovecaptionskip}{0pt}
\captionof{figure}{Exact-CF reference versus empirical-CF training
for the well-separated 3-GMM target using deterministic midpoint nodes. The empirical-CF curve shows the mean
density $L_2$ error over 30 independent repetitions, with error bars equal to
one standard error; the dashed line and shaded band show the mean exact-CF reference and one standard error, respectively.}
\label{fig:gmm_exact_vs_empirical}
\endgroup
\end{minipage}

\end{figure}

The exact-CF reference attains a mean density $L_2$ error of approximately $8.66\times10^{-6}$. By comparison, the mean empirical-CF error decreases from $2.28\times10^{-2}$ at $M=10^3$ to $6.40\times10^{-3}$ at $M=1.6\times10^4$ and $7.22\times10^{-4}$ at $M=10^6$. The corresponding additional errors above the exact-CF reference are approximately $2.28\times10^{-2}$, $6.39\times10^{-3}$, and $7.13\times10^{-4}$. The fitted log--log slopes are $-0.505$ for the empirical-CF error and $-0.507$ for the additional error, with $R^2=0.997$ in both cases. Thus, over the tested range, the error induced by replacing $G$ with $\widehat G_M$ dominates the much smaller exact-CF reference error and decreases systematically as $M$ increases.

As secondary evidence, we consider the short-horizon Kou jump--diffusion benchmark from \cite{kou01}. The increment over horizon $T>0$ is
\[
X_T
=
T\big(\mu-\lambda\kappa-\tfrac{\sigma^2}{2}\big)
+
\sigma\sqrt{T}\,Z
+
\sum_{j=1}^{N_T}Y_j,
\]
where $Z\sim N(0,1)$, $N_T\sim\mathrm{Poisson}(\lambda T)$, and the jumps are independently distributed as $Y_j=U_j^+$ with probability $p$ and $Y_j=-U_j^-$ with probability $1-p$, with $U_j^+\sim\mathrm{Exp}(\zeta_1)$ and $U_j^-\sim\mathrm{Exp}(\zeta_2)$. Here
\[
\kappa
=
p\,\frac{\zeta_1}{\zeta_1-1}
+
(1-p)\,\frac{\zeta_2}{\zeta_2+1}
-1,
\]
and the exact CF is
\[
G_{\mathrm{Kou}}(\eta)
=
\exp\!\Big(
T\Big[
i\eta\Big(\mu-\lambda\kappa-\frac{\sigma^2}{2}\Big)
-\frac{\sigma^2\eta^2}{2}
+
\lambda\Big(
p\,\frac{\zeta_1}{\zeta_1-i\eta}
+
(1-p)\,\frac{\zeta_2}{\zeta_2+i\eta}
-1
\Big)
\Big]
\Big).
\]
We take $T=10^{-3}$, $\mu=0.05$, $\sigma=0.15$, $\lambda=0.1$, $p=0.3445$, $\zeta_1=3.0465$, and $\zeta_2=3.0775$, following \cite[Section~6.5]{du2025fourier}. Training samples for the empirical-CF model are generated from the increment representation above. Both models use $(K_{\mathcal G},K_{\mathcal L})=(45,0)$, a common $P=10{,}001$ Fourier grid, and five paired initializations. Owing to greater optimization variability, we report medians. The exact-CF reference has density $L_2$ error $6.54\times10^{-3}$, whereas the empirical-CF error decreases from $2.43\times10^{-1}$ at $M=10^3$ to $8.49\times10^{-2}$, $2.21\times10^{-2}$, and $1.07\times10^{-2}$ at $M=10^4$, $10^5$, and $10^6$, respectively. Thus empirical-CF training again approaches exact-CF performance as $M$ increases, with a remaining gap of approximately $4.16\times10^{-3}$ at $M=10^6$.

\subsubsection{Gaussian-mixture sanity checks}
\label{subsec:num_gmm}
For the synthetic Gaussian-mixture benchmarks, the data are generated directly from the target densities by standard mixture sampling: a component label is drawn from the categorical distribution determined by the mixture weights, and then a sample is drawn from the corresponding Gaussian component. This yields i.i.d.\ observations from the target density and introduces no additional approximation error at the data-generation stage.

We retain the well-separated 3-GMM target introduced in Subsection~\ref{subsec:num_exact_empirical} and add an overlapping three-component Gaussian mixture with the same weights given by $(\pi_1,\pi_2,\pi_3)=(0.5,0.3,0.2)$, means $(\mu_1,\mu_2,\mu_3)=(-2,0,2)$, and standard deviations $(\sigma_1,\sigma_2,\sigma_3)=(1,1,2)$.

For these tests, we restrict the learned model to the Gaussian subclass by setting $K_{\mathcal L}=0$ and varying $K_{\mathcal G}$. Figure~\ref{fig:num_gmm_fournet_capacity} shows the effect of $K_{\mathcal G}$ on the density $L_2$ error, per-sample NLL, and training time. In both targets, the best mean density accuracy is attained at $K_{\mathcal G}=3$, matching the number of components in the ground truth. Increasing the model size beyond this value does not improve the fit: the average $L_2$ error increases gradually, while the NLL remains nearly flat. This is the expected behaviour for a positive finite mixture trained on a well-specified target.

\begin{figure}[htbp]
\centering
\begin{subfigure}[t]{0.48\textwidth}
\centering
\includegraphics[width=\textwidth]{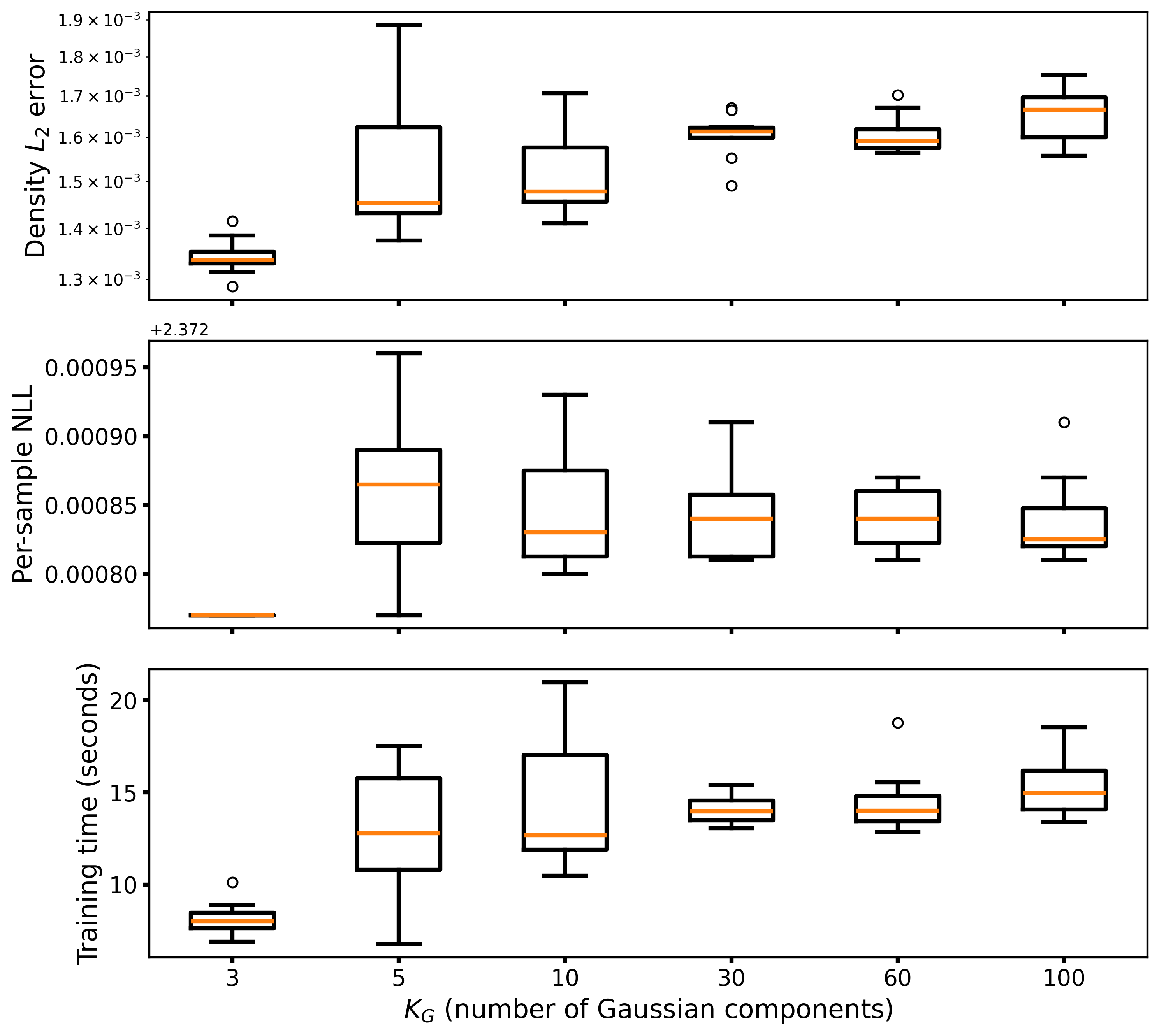}
\caption{Well-separated components}
\end{subfigure}
\hfill
\begin{subfigure}[t]{0.48\textwidth}
\centering
\includegraphics[width=\textwidth]{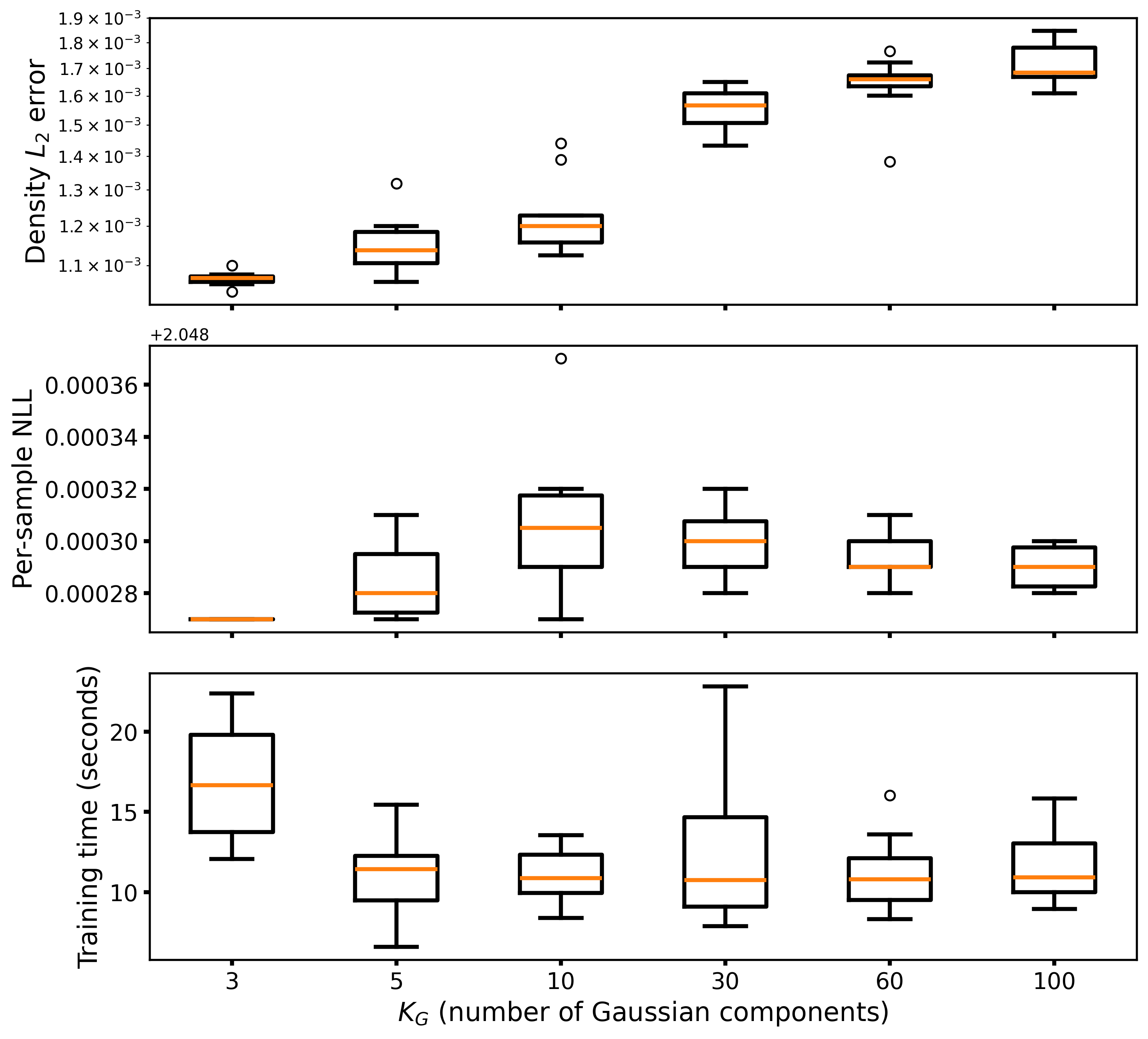}
\caption{Overlapping components}
\end{subfigure}
\caption{Effect of the number $K_{\mathcal G}$ of learned Gaussian components on the density $L_2$ error, per-sample NLL, and training time. In both examples, the best mean density accuracy is attained at $K_{\mathcal G}=3$.}
\label{fig:num_gmm_fournet_capacity}
\end{figure}

Table~\ref{tab:num_gmm_em_compare} compares the data-driven Fourier-mixture method with a capacity-matched Gaussian-mixture model fitted by EM, with $K_{\mathcal G}=K_{\mathrm{EM}}=3$.

\begin{table}[htbp]
\centering
\caption{Capacity-matched comparison between the data-driven Fourier-mixture method (Gaussian subclass) and EM on the three-component Gaussian-mixture targets.}
\label{tab:num_gmm_em_compare}
\begin{tabular}{l l l l l}
\toprule
Method & Target & Components & $L_2(g,\widehat g)$ & NLL \\
\midrule
Fourier-mixture & Well-separated & $K_{\mathcal G}=3$ & $1.32\times 10^{-3}$ & $2.373$ \\
EM & Well-separated & $K_{\mathrm{EM}}=3$ & $1.07\times 10^{-3}$ & $2.370$ \\
\midrule
Fourier-mixture & Overlapping & $K_{\mathcal G}=3$ & $1.05\times 10^{-3}$ & $2.048$ \\
EM & Overlapping & $K_{\mathrm{EM}}=3$ & $2.29\times 10^{-3}$ & $2.048$ \\
\bottomrule
\end{tabular}
\end{table}

In the well-separated case, the two methods are essentially comparable in both NLL and density $L_2$ error, with EM slightly better in $L_2$. In the overlapping case, however, the proposed method attains a substantially smaller density $L_2$ error while maintaining essentially the same NLL. Thus the method behaves correctly on standard Gaussian-mixture benchmarks and remains competitive with EM in the classical Gaussian setting.

\subsubsection{Heavy-tail and jump examples}
\label{subsec:num_heavy}
We next consider the Cauchy distribution, a canonical heavy-tailed benchmark for which both the density and the CF are available in closed form. Together with the short-horizon Kou jump benchmark considered in Subsection~\ref{subsec:num_exact_empirical}, this allows us to assess the method on both heavy-tailed and jump-driven non-Gaussian laws.

For the Cauchy test, the training data are generated directly from the Cauchy distribution with location $x_0=0$ and scale $\gamma=0.5$, whose density and CF are
\[
g_{\mathrm{Cau}}(x)
=
\frac{1}{\pi\gamma}\frac{1}{1+\big((x-x_0)/\gamma\big)^2},
\qquad
G_{\mathrm{Cau}}(\eta)
=
\exp(i x_0\eta-\gamma|\eta|).
\]
The Gaussian-mixture model corresponds to $(K_{\mathcal G},K_{\mathcal L})=(10,0)$, while the Gaussian--Laplace mixture uses $(K_{\mathcal G},K_{\mathcal L})=(5,5)$ and is trained against the empirical CF computed from the same sample. The exact CF is used only as an analytic reference for evaluation.

\begin{table}[htbp]
\centering
\caption{Fourier-domain errors between $G$ and $\widehat G$ for the Cauchy example.}
\label{tab:num_cauchy_kou}
\begin{tabular}{l l l l l}
\toprule
Model (mixture) & $L_2^{\operatorname{Re}}(\cdot)$ & $L_2^{\operatorname{Im}}(\cdot)$ & $\operatorname{MPE}^{\operatorname{Re}}(\cdot)$ & $\operatorname{MPE}^{\operatorname{Im}}(\cdot)$\\
\midrule
Cauchy (Gaussian) & $1.3\times 10^{-6}$ & $4.2\times 10^{-7}$ & $2.0\times 10^{-2}$ & $1.6\times 10^{-3}$\\
Cauchy (Gaussian--Laplace) & $1.2\times 10^{-8}$ & $3.0\times 10^{-9}$ & $5.0\times 10^{-3}$ & $7.2\times 10^{-4}$\\
\bottomrule
\end{tabular}
\end{table}

Table~\ref{tab:num_cauchy_kou} shows that adding Laplace components reduces the Fourier-domain error by one to two orders of magnitude for the Cauchy target in both the real and imaginary parts. The controlled Kou comparison in Subsection~\ref{subsec:num_exact_empirical} provides complementary evidence for a jump-driven law: empirical-CF training approaches the exact-CF reference as $M$ increases. Taken together, the two examples show that the Gaussian--Laplace extension is most beneficial when the target exhibits heavier tails or sharper local structure, while empirical-CF Fourier training remains effective for the short-horizon Kou benchmark.

\subsubsection{Two-dimensional Cauchy example}
\label{subsec:num_multi}
To demonstrate that the multivariate extension is more than a formal generalization, we consider a genuinely heavy-tailed 2D benchmark. In the present test, the target is the product of two independent centered Cauchy marginals with common scale $\gamma=0.5$, so that
\[
g_{\mathrm{Cau},2}(\mathbf{x})
=
\prod_{j=1}^{2}
\frac{1}{\pi\gamma}\frac{1}{1+\big(x_j/\gamma\big)^2},
\qquad
G_{\mathrm{Cau},2}(\boldsymbol{\eta})
=
\exp\!\big(-\gamma(|\eta_1|+|\eta_2|)\big),
\]
for $\mathbf{x}=(x_1,x_2)\in\Rbb^2$ and $\boldsymbol{\eta}=(\eta_1,\eta_2)\in\Rbb^2$. The training data are generated directly from this product-Cauchy law and then used to form the empirical CF. We compare three learned models: two Gaussian-mixture models with $(K_{\mathcal G},K_{\mathcal L})=(30,0)$ and $(40,0)$, and one Gaussian--Laplace mixture with $(K_{\mathcal G},K_{\mathcal L})=(30,20)$.

\begin{table}[htbp]
\centering
\caption{Fourier-domain errors between $G$ and $\widehat G$ for the 2D Cauchy example.}
\label{tab:num_multi_cauchy}
\setlength{\tabcolsep}{3.8pt}
\begin{tabular}{l c c c c c}
\toprule
Model & $(K_{\mathcal G},K_{\mathcal L})$ & $L_2^{\operatorname{Re}}(\cdot)$ & $L_2^{\operatorname{Im}}(\cdot)$ & $\operatorname{MPE}^{\operatorname{Re}}(\cdot)$ & $\operatorname{MPE}^{\operatorname{Im}}(\cdot)$\\
\midrule
Gauss. & (30,0) & $3.30\times 10^{-2}$ & $1.59\times 10^{-3}$ & $2.42\times 10^{-2}$ & $1.76\times 10^{-3}$\\
Gauss. & (40,0) & $4.65\times 10^{-2}$ & $1.01\times 10^{-3}$ & $2.83\times 10^{-2}$ & $1.29\times 10^{-3}$\\
Gauss.--Laplace & (30,20) & $1.51\times 10^{-2}$ & $5.36\times 10^{-4}$ & $1.20\times 10^{-2}$ & $2.93\times 10^{-4}$\\
\bottomrule
\end{tabular}
\end{table}

Table~\ref{tab:num_multi_cauchy} shows that the Gaussian--Laplace model improves all reported CF error metrics relative to the Gaussian-only baselines. Enlarging the Gaussian-only model from $(30,0)$ to $(40,0)$ improves some imaginary-part errors but does not improve the real-part errors, whereas adding Laplace components improves both. Thus the Fourier-domain training strategy remains effective in two dimensions, and the Gaussian--Laplace model continues to offer a clear advantage on a genuinely heavy-tailed target.

\subsubsection{$L_2$ error decay study}
\label{subsec:num_scaling}
We conclude the direct i.i.d.\ experiments with an $L_2$ error-decay study motivated by Theorem~\ref{thm:error_decomposition_section3} and Corollary~\ref{cor:expected_L2_norm_section3}, using the well-separated 3-GMM target and the correctly specified Gaussian model $(K_{\mathcal G},K_{\mathcal L})=(3,0)$ introduced in Subsection~\ref{subsec:num_exact_empirical}.
The model is therefore well specified. For both studies, the Fourier nodes are the midpoints of a uniform partition of $[-50,50]$. We fix $\eta'=50$, use $30$ independent repetitions per setting, take $P=4000$ for the $M$-dependence study, {\sblue{where $M$ is the number of observations entering the empirical CF,}} and vary
\[
M\in\{10^3,\;4\times10^3,\;1.6\times10^4,\;6.4\times10^4,\;2.56\times10^5,\;10^6\}.
\]
For the $P$-dependence study, we fix $M=10^6$ and vary
\[
P\in\{250,\;500,\;1000,\;2000,\;4000,\;8000,\;16{,}000\}.
\]
The theory-linked quantities $\mathrm{MSE}_P^{*}$, $R_P^{*}$, $\mathrm{Loss}_P^{*}$, $V_P$, and $W_P$ are also recorded.
\begin{figure}[htbp]
\centering
\begin{subfigure}[t]{0.48\textwidth}
\centering
\includegraphics[width=\textwidth]{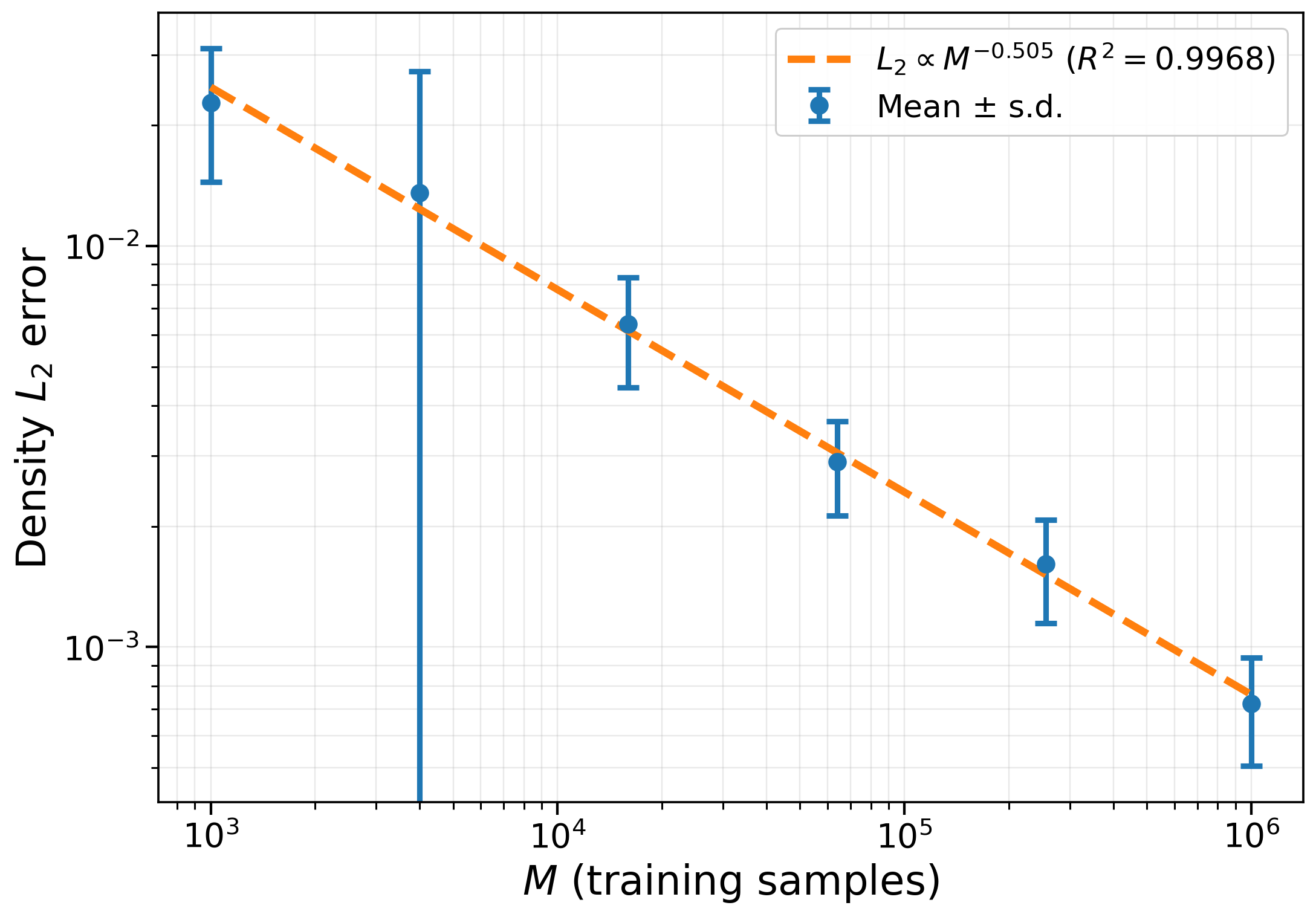}
\caption{$M$-dependence with $P=4000$}
\end{subfigure}
\hfill
\begin{subfigure}[t]{0.48\textwidth}
\centering
\includegraphics[width=\textwidth]{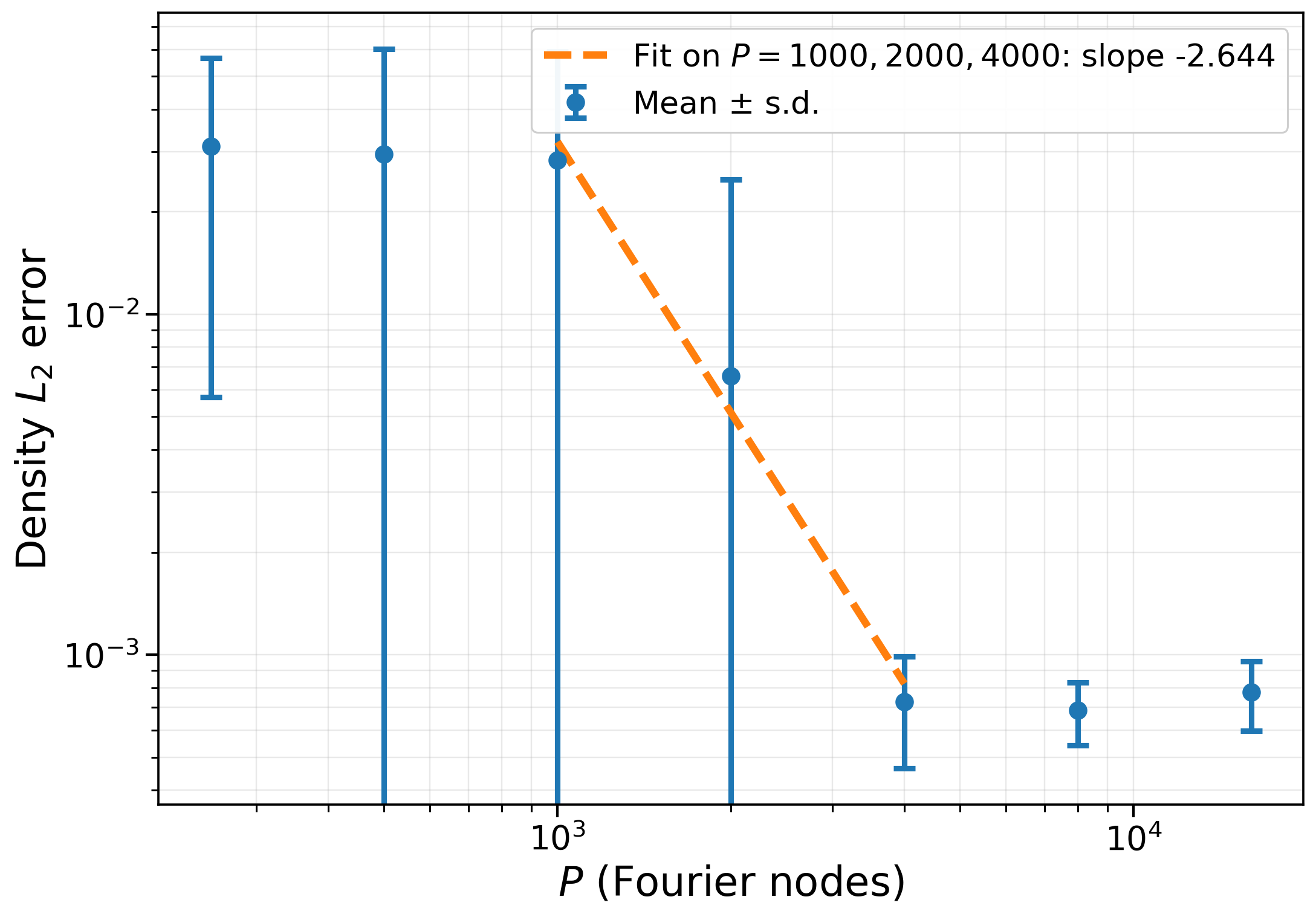}
\caption{$P$-dependence with $M=10^6$}
\end{subfigure}
\caption{$L_2$ error decay for the well-specified 3-GMM target.}
\label{fig:primary_scaling}
\end{figure}
The fitted log--log slope for the $M$-dependence study is $-0.505$ with $R^2=0.997$, so the observed mean $L_2$ error decays at a rate close to $M^{-1/2}$. This matches the order of the empirical-CF contribution to the expected $L_2$ norm identified in Corollary~\ref{cor:expected_L2_norm_section3}; the corollary does not, by itself, assert a universal rate for the total error because the truncation, achieved-loss, and discretization terms also contribute. The exact-CF reference in Figure~\ref{fig:gmm_exact_vs_empirical} is more than eighty times smaller than the empirical-CF error at $M=10^6$, supporting the interpretation that empirical-CF sampling is the dominant source of error over the tested range. The $P$-dependence curve shows a steep initial decrease followed by saturation, indicating that once the Fourier grid is sufficiently fine, further refinement has little effect relative to the remaining finite-sample and training errors.

As a secondary check, we also repeated the $L_2$ error-decay experiment for the Cauchy target from Subsection~\ref{subsec:num_heavy}, with $x_0=0$ and $\gamma=0.5$ (results not shown). This case is intentionally misspecified: a fixed finite Gaussian--Laplace mixture cannot represent the Cauchy density exactly, so the total error is expected to contain an approximation floor in addition to the empirical-CF sampling and discretization errors. This is precisely what is observed. For example, at $M=10^6$ and $P=4000$, increasing the model size from $(K_{\mathcal G},K_{\mathcal L})=(3,3)$ to $(100,30)$ and $(150,45)$ reduces the realized $L_2$ error floor from about $1.82\times 10^{-2}$ to $9.04\times 10^{-3}$ and $8.80\times 10^{-3}$, respectively. The fitted transient $M$-dependence also moves closer to the well-specified $M^{-1/2}$ behavior, with the log-log slope changing from about $-0.642$ for $(K_{\mathcal G},K_{\mathcal L})=(3,3)$ to about $-0.604$ for $(K_{\mathcal G},K_{\mathcal L})=(150,45)$. Thus the Cauchy study illustrates the expected bias--variance picture: increasing model capacity reduces the approximation floor, while the well-specified 3-GMM study gives the clearest observation of approximately $M^{-1/2}$ decay in the density $L_2$ error over the tested range.

\subsection{Resampling-based pseudo-sampling}
\label{subsec:num_pseudo}
\subsubsection{Data and processing}
\label{sssec:num_boot_data}
The resampling-based pseudo-sampling experiment is built from a long monthly real return series for Australian equity, expressed in AUD. As a proxy for the Australian equity market, we use a capitalization-weighted total return series for the All Ordinaries index. Monthly price levels are taken from the Bloomberg All Ordinaries Price Index (AS30). Prior to the start of the Bloomberg All Ordinaries Accumulation Index in 1999, reinvested dividends are imputed using trailing dividend-yield data from \cite{Mathews2019AustEquities} and the Reserve Bank of Australia. The nominal index is then deflated by the Australian CPI, yielding monthly real Australian equity returns over 1935{:}1--2022{:}12.

The target horizon in this experiment is one year. Starting from the monthly real return series, we generate many independent 30-year bootstrap paths using the stationary bootstrap of \cite{politis1994stationary}, with geometrically distributed block lengths having expected length $24$ months
From each path, we extract one fixed annual observation and use the resulting one-year total returns
{\sblue{$X_1^{\dagger},\ldots,X_{M_{\mathrm{tot}}}^{\dagger}$, with $M_{\mathrm{tot}}=2.56\times 10^5$.}}
{\sblue{After the split described below, the $M=M_{\mathrm{tr}}$ observations in $\mathcal D_{\mathrm{tr}}$ are used to construct the empirical pseudo-CF}}
$\widehat G_M^{\dagger}(\eta)
=
\frac{1}{M}\sum_{m=1}^{M} e^{i\eta X_m^{\dagger}}$.
Thus the experiment is based on a resampling law for one-year Australian equity returns rather than on directly observed i.i.d.\ annual data.

\subsubsection{Australian equity pseudo-sample experiment}
\label{subsec:num_au_equity}
We use a train/validation/test split
$M_{\mathrm{tr}}:M_{\mathrm{val}}:M_{\mathrm{te}}
=
102{,}400:76{,}800:76{,}800$,
{\sblue{so that $M=M_{\mathrm{tr}}=102{,}400$ pseudo-samples are used to construct $\widehat G_M^{\dagger}$,}}
$P=1000$ uniformly spaced Fourier nodes on $[-50,50]$, and the same training schedule as in the preceding experiments. Model sizes are chosen by validation NLL.
For EM-GMM, however, naive minimum-NLL selection is unreliable in this pseudo-sample setting: because the resampled dataset contains repeated values, the likelihood can continue to improve as Gaussian components shrink around discrete atoms. We therefore test EM-GMM component counts across a range of $K_{\mathrm{EM}}$ and inspect the validation-NLL and marginal-improvement profiles in Figure~\ref{fig:au_em_model_selection}. The validation NLL continues to improve for larger models, but the marginal gain drops sharply after $K_{\mathrm{EM}}=8$; beyond this point, additional components mainly add local flexibility around repeated pseudo-sample values. We therefore use the parsimonious EM-GMM benchmark with $K_{\mathrm{EM}}=8$ as the main comparison.

\begin{figure}[htbp]
    \centering
    \begin{subfigure}[b]{0.95\textwidth}
        \centering
        \includegraphics[width=\textwidth]{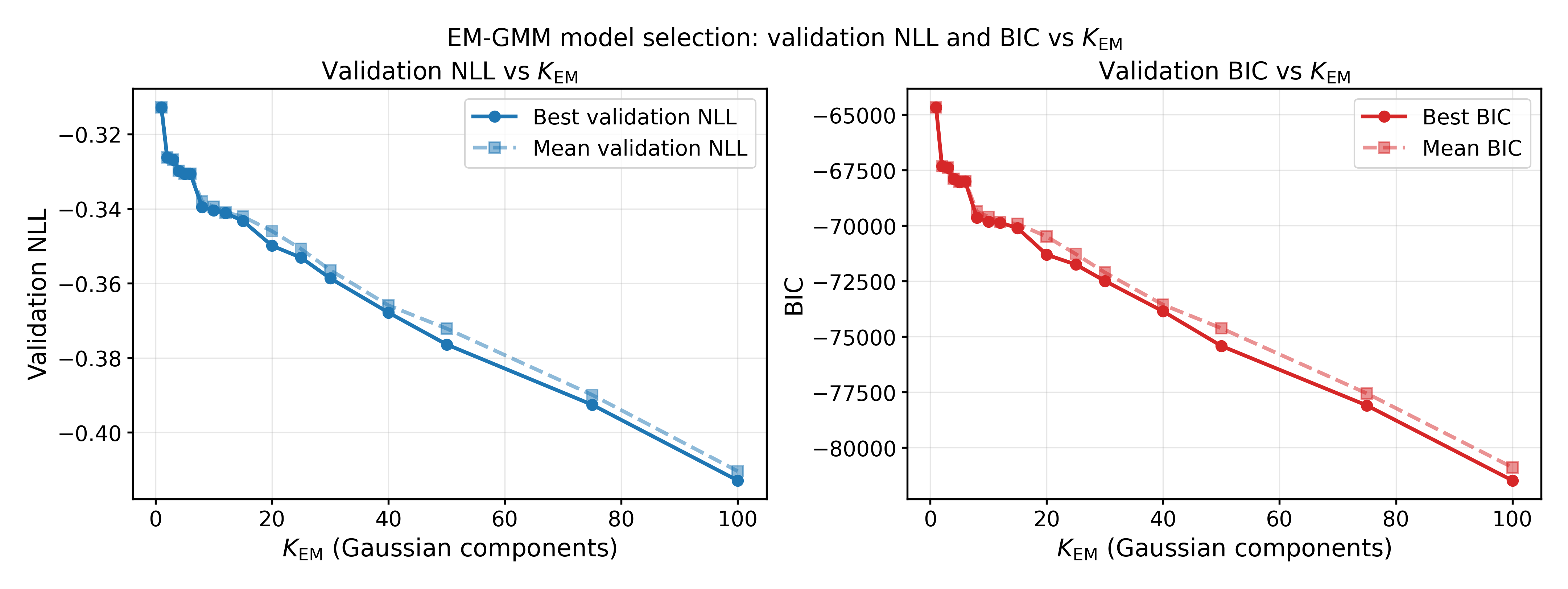}
        \caption{Validation NLL and BIC.}
    \end{subfigure}
 \vspace{0.75em}
    \begin{subfigure}[b]{0.95\textwidth}
        \centering
        \includegraphics[width=\textwidth]{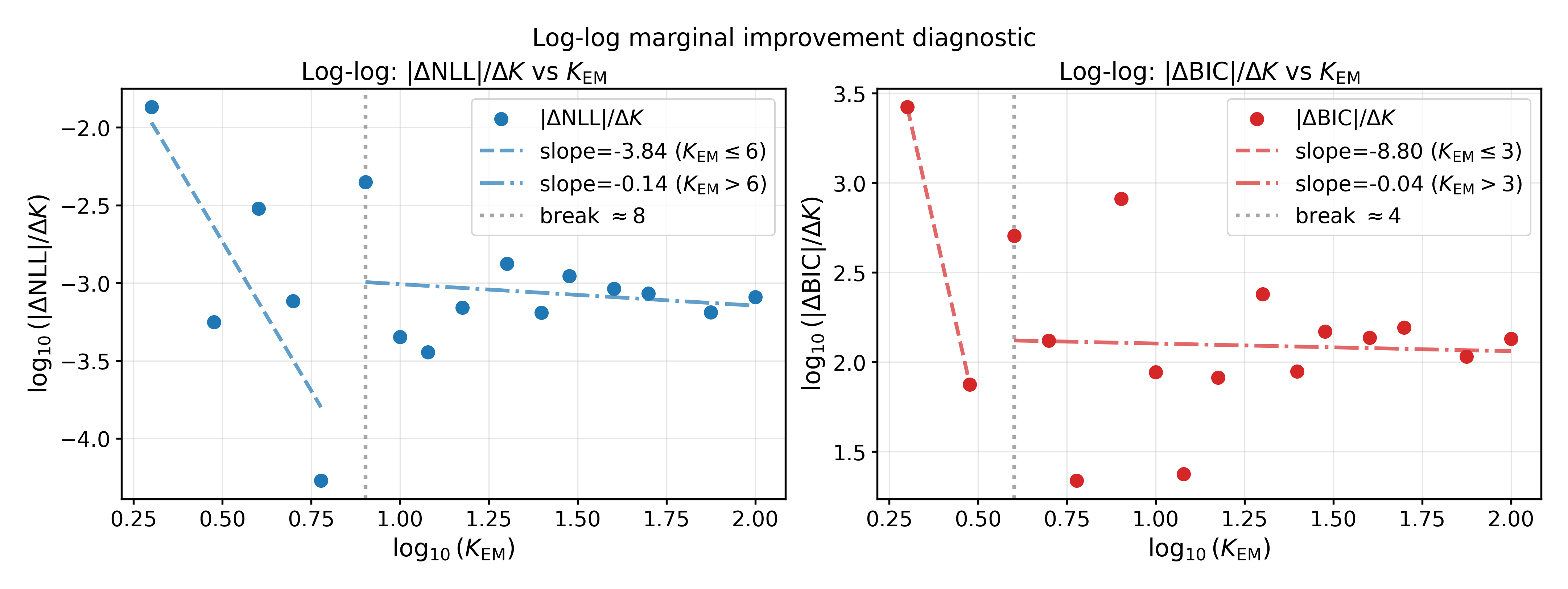}
        \caption{Marginal improvement.}
    \end{subfigure}
    \caption{EM-GMM model selection for the Australian equity pseudo-sample. The diagnostics identify $K_{\mathrm{EM}}=8$ as a parsimonious benchmark size.}
    \label{fig:au_em_model_selection}
\end{figure}

\begin{table}[htbp]
  \centering
  \caption{Selected models and test performance for the Australian equity pseudo-sample. The upper panel reports model size and NLL; the lower panel reports Fourier-domain errors against the empirical test pseudo-CF.}
  \label{tab:au_pseudosample_summary_updated}
  \begin{tabular}{l c c c}
    \toprule
    Method & $(K_{\mathcal G},K_{\mathcal L})$ & Val NLL & Test NLL \\
    \midrule
    EM-GMM       & $(8,0)$   & $-0.3308$ & $-0.3250$ \\
    Fourier-GMM  & $(15,0)$  & $-0.3329$ & $-0.3269$ \\
    Fourier-GLMM & $(15,6)$  & $-0.3344$ & $-0.3286$ \\
    \bottomrule
  \end{tabular}

  \vspace{1ex}

  \begin{tabular}{l c c c c}
    \toprule
    Method & $L_2^{\operatorname{Re}}(\cdot)$ & $L_2^{\operatorname{Im}}(\cdot)$ & $\operatorname{MPE}^{\operatorname{Re}}(\cdot)$ & $\operatorname{MPE}^{\operatorname{Im}}(\cdot)$ \\
    \midrule
    EM-GMM       & $4.194\times10^{-3}$ & $4.723\times10^{-3}$ & $2.393\times10^{-2}$ & $2.632\times10^{-2}$ \\
    Fourier-GMM  & $1.446\times10^{-3}$ & $1.409\times10^{-3}$ & $1.137\times10^{-2}$ & $9.414\times10^{-3}$ \\
    Fourier-GLMM & $1.359\times10^{-3}$ & $1.375\times10^{-3}$ & $8.886\times10^{-3}$ & $8.832\times10^{-3}$ \\
    \bottomrule
  \end{tabular}
\end{table}

\begin{figure}[!htbp]
    \centering
    \includegraphics[width=1\textwidth]{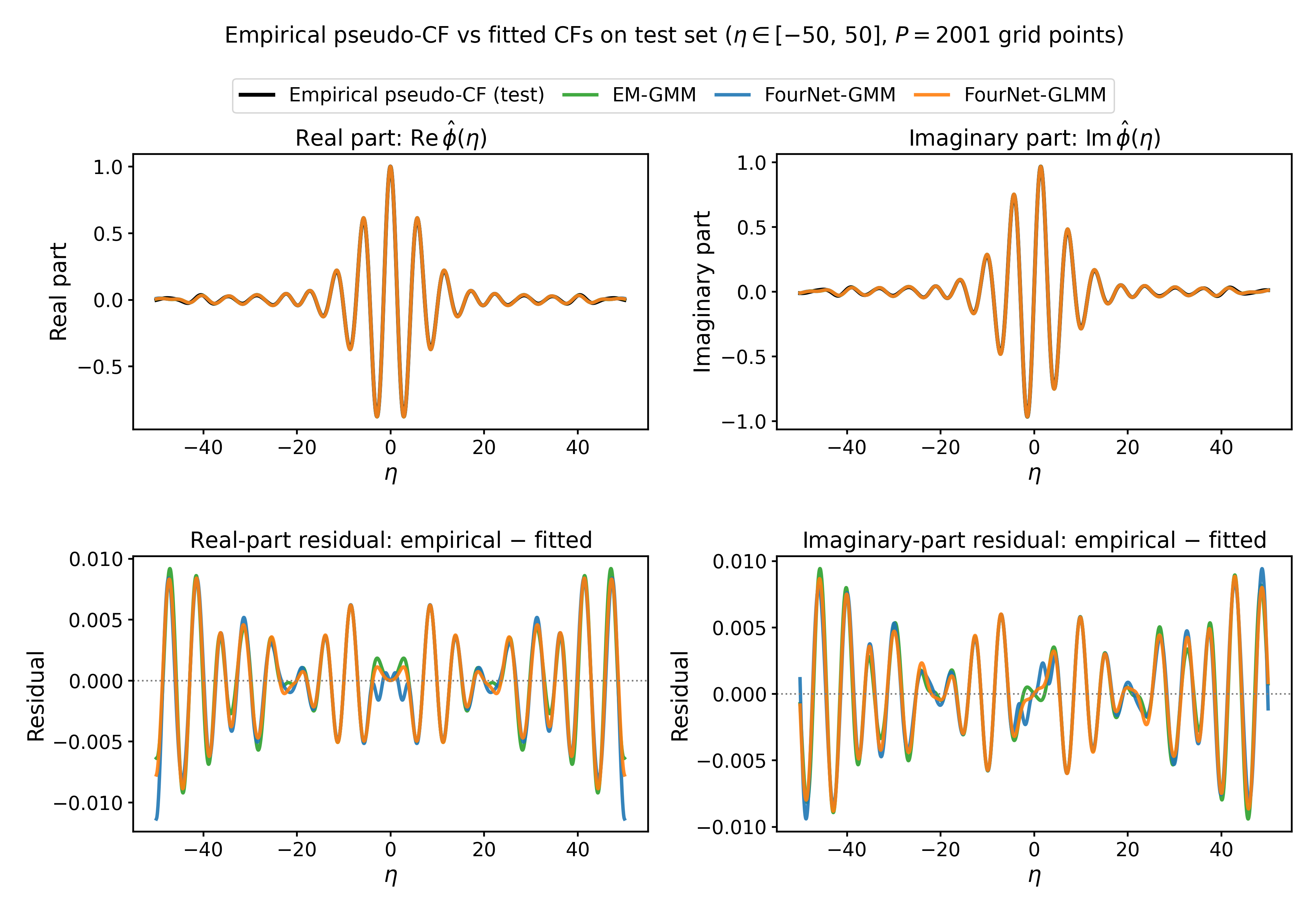}
    \caption{Australian equity pseudo-sample experiment: empirical test pseudo-CF, fitted CFs, and residual curves under the parsimonious EM-GMM benchmark.}
    \label{fig:au_equity_main_results}
\end{figure}

Table~\ref{tab:au_pseudosample_summary_updated} shows that Fourier-GLMM achieves the best validation and test NLL under the parsimonious EM comparison, while also giving the smallest real-part and imaginary-part CF errors. The advantage over EM is clearest in Fourier space, which is consistent with the fact that the Fourier models are trained against CF data rather than directly by maximum likelihood. Figure~\ref{fig:au_equity_main_results} confirms this visually: the Fourier-based fitted CFs track the empirical test pseudo-CF more closely than the EM benchmark over the diagnostic frequency range.

Since Fourier-domain accuracy and NLL do not by themselves guarantee accurate tail behavior, and since one motivation for the Gaussian--Laplace model is improved flexibility for non-Gaussian tails, we also examine tail calibration. Tail calibration is adequate for all three methods, with no uniform dominance across tail levels. At the deepest lower tail ($\alpha=0.5\%$), EM-GMM attains the smallest lower-tail probability error ($1.6\times10^{-4}$), whereas Fourier-GLMM is most accurate in the upper tail ($1.2\times10^{-4}$). At more moderate levels, the methods remain close. This is consistent with the modest excess kurtosis of the pseudo-sample: the train, validation, and test splits have excess kurtosis $0.743$, $0.735$, and $0.723$, respectively. Since these values are close to the Gaussian benchmark of $0$ and well below the Laplace value $3$, the one-year total-return distribution is only moderately heavy-tailed, so the Laplace part of the GLMM improves the Fourier-domain fit without producing a uniformly dominant tail-calibration result.

\section{Conclusion and future work}
\label{sec:conclusion}
We introduced a data-driven Fourier-trained neural-network method for estimating fixed-horizon probability densities from empirical transform information. The method combines Fourier-domain training with a positive Gaussian--Laplace output structure, so the learned model remains a proper density throughout optimization while retaining a closed-form CF. Thus, the estimation problem can be formulated directly in Fourier space rather than through physical-space likelihood fitting alone.

The analysis was developed for two sampling settings. In the direct i.i.d.\ case, we derived an {\dpurple{expected squared $L_2$ density-error bound, together with a corresponding bound for the expected $L_2$ norm,}} separating Fourier truncation, empirical training error, discretization, and empirical-CF sampling error. In the resampling-based pseudo-sampling case, the same Fourier-to-real-space argument carries over conditionally on the observed history, with {\dpurple{an additional pseudo-law discrepancy term}} measuring the gap between the resampling law and the true fixed-horizon law. We also extended the framework to multiple dimensions and discussed its computational complexity.

The numerical experiments support the main conclusions. The method is competitive with EM on well-specified Gaussian-mixture benchmarks, improves over the Gaussian subclass on heavy-tailed targets such as the Cauchy law, and a controlled exact-CF reference comparison quantifies the additional density error introduced by empirical-CF training. In the well-specified 3-GMM study, the observed $L_2$ norm decays at a rate close to the inverse square root of the sample size when the remaining error terms are controlled, consistent with the bound for the expected $L_2$ norm. The method also remains viable in two dimensions. The Australian equity experiment further illustrates that the resampling-based framework can be used to estimate fixed-horizon laws from dependent time-series data.

Several directions merit further study. On the theoretical side, it would be useful to derive explicit convergence rates for the pseudo-law discrepancy terms under specific resampling schemes, and to extend the analysis beyond conditionally i.i.d.\ pseudo-samples to more general dependent settings. On the application side, the resampling-based pseudo-sampling framework provides a route toward data-driven fixed-horizon laws in stochastic control and portfolio optimization, complementing existing Fourier-trained transition-kernel approaches based on closed-form CFs \cite{dang2026monotone,dang2026multi}.

\vfill

\appendix
\section{Proof of Lemma~\ref{lem:discretization_section3}}
\label{app:discretization_section3}
By Assumption~\ref{ass:modelling}~(A3), $G$ is Lipschitz on $[-\eta',\eta']$. Under \eqref{eq:Theta_GL_section2}, the real and imaginary parts of $\widehat G(\cdot;\theta)$ given by \eqref{eq:GL_cf_real_section2}--\eqref{eq:GL_cf_imag_section2} are uniformly Lipschitz on $[-\eta',\eta']$, and hence so is $F_\theta(\cdot)$. Thus
\[
|F_\theta(\eta)-F_\theta(\xi)|\le C'|\eta-\xi|,
\qquad
\forall\,\eta,\xi\in[-\eta',\eta'],\ \forall\,\theta\in\Theta,
\]
for some $C'=C'(\eta')>0$. {\dpurple{For each cell $[\xi_{p-1},\xi_p]$ and its associated tag $\eta_p\in[\xi_{p-1},\xi_p]$, we have
\[
\begin{aligned}
\left|
\delta_p F_\theta(\eta_p)
-
\int_{\xi_{p-1}}^{\xi_p}F_\theta(\eta)\,d\eta
\right|
&\le
\int_{\xi_{p-1}}^{\xi_p}
|F_\theta(\eta_p)-F_\theta(\eta)|\,d\eta
\\
&\le
C'\delta_p^2.
\end{aligned}
\]
Summing over $p=1,\ldots,P$ and using
$\delta_p\le\delta_{\max}=C_1/P$ gives
\[
\left|
\sum_{p=1}^{P}\delta_p F_\theta(\eta_p)
-
\int_{-\eta'}^{\eta'}F_\theta(\eta)\,d\eta
\right|
\le
C'\sum_{p=1}^{P}\delta_p^2
\le
C'P\delta_{\max}^2
=
\frac{C'C_1^2}{P},
\]
which proves \eqref{eq:disc_bound_section3}.}}

\section{Proof of Lemma~\ref{lem:truncation_section3}}
\label{app:truncation_section3}
By Assumption~\ref{ass:modelling}~(A2) and Plancherel's theorem, $G\in L_2(\Rbb)$, which gives the first bound in  \eqref{eq:trunc_section3} for $\eta'>0$ sufficiently large. For the second, \eqref{eq:GL_cf_section2} and \eqref{eq:Theta_GL_section2} imply
\[
|\widehat G(\eta;\theta)|
\le
\sum_{k=1}^{K_{\mathcal G}}\beta_k^{(\mathcal G)} \exp(-\sigma_k^2\eta^2/2)
+
\sum_{\ell=1}^{K_{\mathcal L}}\beta_\ell^{(\mathcal L)}\frac{1}{1+b_\ell^2\eta^2}
\le
\exp(-\sigma_{\min}^2\eta^2/2)
+
\frac{1}{1+b_{\min}^2\eta^2},
\]
whose square is integrable on $\Rbb$, uniformly in $\theta$.
The  bound \eqref{eq:tail_bound_section3} then follows~from
\[
|G(\eta)-\widehat G(\eta;\theta)|^2
\le
2|G(\eta)|^2+2|\widehat G(\eta;\theta)|^2.
\]

\section{Proof of  Lemma~\ref{lem:bootstrap_ecf}}
\label{app:bootstrap_ecf}
Given $\mathcal H_N$, the pseudo-samples $X_1^{\dagger},\ldots,X_M^{\dagger}$ are i.i.d.\ with common law $\mathcal L_{\tau, N}^{\dagger}$ and CF $G_N^{\dagger}$. Therefore,
\[
\mathbb E^{\dagger}\!\left[e^{i\eta X_m^{\dagger}}\right]=G_N^{\dagger}(\eta),
\qquad
\mathbb E^{\dagger}\!\left[\big|e^{i\eta X_m^{\dagger}}\big|^2\right]=1.
\]
The first equation of \eqref{eq:boot_mean} follows from \eqref{eq:hatG_boot}. Using the same variance calculation as in \eqref{eq:ecf_mean_var_section2} gives the second equation of \eqref{eq:boot_mean}. The identities \eqref{eq:boot_mse_mae}--\eqref{eq:boot_loss_bound} then follow by repeating the proof of Lemma~\ref{lem:ecf_error_section3} with $G$ replaced by $G_N^{\dagger}$ and $\widehat G_M$ replaced by $\widehat G_M^{\dagger}$.

{\dpurple{
\section{Square integrability of the multivariate Laplace kernel}
\label{app:multi_laplace_L2}
For the multivariate Laplace kernel in \eqref{eq:multi_laplace_kernel}, the squared modulus of its CF is
\[
\left|
\frac{\exp(i\boldsymbol{\eta}^{\top}\boldsymbol{\nu})}
{1+\boldsymbol{\eta}^{\top}\mathbf{\Lambda}\boldsymbol{\eta}}
\right|^2
=
\frac{1}{\bigl(1+\boldsymbol{\eta}^{\top}\mathbf{\Lambda}\boldsymbol{\eta}\bigr)^2}.
\]
Since $\mathbf{\Lambda}$ is symmetric positive definite, the change of variables
$\mathbf{y}=\mathbf{\Lambda}^{1/2}\boldsymbol{\eta}$ gives
\[
\int_{\Rbb^d}
\frac{d\boldsymbol{\eta}}
{\bigl(1+\boldsymbol{\eta}^{\top}\mathbf{\Lambda}\boldsymbol{\eta}\bigr)^2}
=
|\mathbf{\Lambda}|^{-1/2}
|\mathbb S^{d-1}|
\int_0^\infty \frac{r^{d-1}}{(1+r^2)^2}\,dr.
\]
The radial integrand behaves as $r^{d-1}$ near the origin and as $r^{d-5}$ as $r\to\infty$. Hence the integral is finite if and only if $d<4$. By Plancherel's theorem, the multivariate Laplace kernel belongs to $L_2(\Rbb^d)$ precisely for $d=1,2,3$.

Consequently, if at least one Laplace component has positive weight, the positive Gaussian--Laplace mixture in \eqref{eq:Sigma_GL_multi} is not in $L_2(\Rbb^d)$ for $d\ge4$, since
$\widehat g(\mathbf{x};\theta)\ge
\beta_\ell^{(\mathcal L)}\varphi_{\mathcal L}^{(d)}(\mathbf{x};\boldsymbol{\nu}_\ell,\mathbf{\Lambda}_\ell)$.
Thus the restriction to $d=1,2,3$ is necessary for the present $L_2$ analysis whenever a Laplace component has positive weight. The Gaussian-only subclass remains in $L_2(\Rbb^d)$ for every $d\ge1$.
}}

\vfill
\end{document}